\renewcommand\subsubsection{\@startsection{subsubsection}{3}{\z@}%
                       {-18\p@ \@plus -4\p@ \@minus -4\p@}%
                       {0.5em \@plus 0.22em \@minus 0.1em}%
                       {\normalfont\normalsize\bfseries\boldmath}}
\setlist[description]{leftmargin=1cm,labelindent=0.5cm}
\newcommand{\stam}[1]{}
\newcommand{\tup}[1]{\ensuremath{\left\langle #1 \right\rangle}}
\newcommand{\set}[1]{\ensuremath{\left\lbrace #1 \right\rbrace}}
\newcommand{\proj}[2]{\ensuremath{\mathsf{proj}_{#1}(#2)}}
\newcommand{\rec}{\ensuremath{\mathit{Inf}}}
\newcommand{\pref}{\ensuremath{\mathsf{pref}}}
\newcommand{\cmpl}[1]{\ensuremath{#1^c}}
\newcommand{\zug}[1]{\langle #1 \rangle}
\mathchardef\mhyphen="2D
\newcommand{\strongprob}{\mathsf{STRONG\mhyphen SYNT}}
\newcommand{\aaprob}{\mathsf{AA\mhyphen SYNT}}
\newcommand{\agprob}{\mathsf{AG\mhyphen SYNT}}
\newcommand{\always}{\ensuremath{\mathbf{G}}}
\newcommand{\T}{\ensuremath{\mathcal{G}}}
\newcommand{\G}{\ensuremath{\mathcal{G}}}
\newcommand{\V}{\ensuremath{V}}
\newcommand{\E}{\ensuremath{E}}
\newcommand{\vinit}{\ensuremath{v^{0}}}
\NewDocumentCommand{\pol}{gg}{\ensuremath{\pi}\IfNoValueTF{#1}{}{(#1,#2)}}
\newcommand{\actpol}{\alpha} 
\newcommand{\bidpol}{\beta} 
\newcommand{\infpaths}{\ensuremath{\mathit{Paths}^{\mathsf{inf}}}}
\newcommand{\finpaths}{\ensuremath{\mathit{Paths}^{\mathsf{fin}}}}
\renewcommand{\path}{\ensuremath{\mathit{out}}}
\newcommand{\spec}{\ensuremath{\Phi}}
\NewDocumentCommand{\last}{g}{\ensuremath{\textit{last}\IfNoValueTF{#1}{(h)}{(#1)}}}
\renewcommand{\S}{\mathcal{S}}
\renewcommand{\H}{\mathcal{H}}
\newcommand{\C}{\mathcal{C}}
\newcommand{\B}{\mathbb{B}}
\renewcommand{\t}{\tau}
\newcommand{\out}{\mathrm{out}}
\NewDocumentCommand{\spoiling}{gg}{\ensuremath{\textit{spoil}_{\IfNoValueTF{#2}{}{#2}}(#1)}} 
\NewDocumentCommand{\helping}{gg}{\ensuremath{\textit{help}_{\IfNoValueTF{#2}{}{#2}}(#1)}} 
\newcommand{\comp}[2]{\ensuremath{#1\!\bowtie\! #2}}
\newcommand{\compOT}{\comp{\t_1}{\t_2}}
\newcommand{\A}{\ensuremath{\mathcal{A}}}
\renewcommand{\P}{\ensuremath{\mathit{Player}}}
\newcommand{\PO}{\ensuremath{\mathit{\P~X}}\xspace}
\newcommand{\PT}{\ensuremath{\mathit{\P~Y}}\xspace}
\newcommand{\PLi}{\ensuremath{\mathit{\P~i}}\xspace}
\NewDocumentCommand{\Adm}{gg}{\textit{Adm}\IfNoValueTF{#1}{}{
		\IfNoValueTF{#2}{(#1)}{^{#2}(#1)}}}
\NewDocumentCommand{\arch}{gggg}{\ensuremath{\IfNoValueTF{#1}{\A}{\tup{#1, #2, #3, #4}}}}
\NewDocumentCommand{\trimmed}{g}{\ensuremath{\IfNoValueTF{#1}{\T}{#1}_{\mathsf{adm}}}~}
\NewDocumentCommand{\las}{ggg}{\widehat{\IfNoValueTF{#1}{\G}{\G}}_{\IfNoValueTF{#2}{}{#2,} \IfNoValueTF{#3}{}{#3}}}
\newcommand{\lasb}[1]{\widehat{\G}_\buchiset}
\newcommand{\head}[1]{\bar{\G}}
\NewDocumentCommand{\reach}{g}{\ensuremath{\mathit{Reach}\IfNoValueTF{#1}{}{(#1)}}}
\newcommand{\safe}{\ensuremath{\mathit{Safe}}}
\newcommand{\buchiword}{B\"uchi~}
\newcommand{\buchi}{\ensuremath{\mathit{B\ddot{u}chi}}}
\newcommand{\buchiset}{\ensuremath{\mathcal{B}}}
\newcommand{\sat}{\ensuremath{\models}}
\newcommand{\parity}{\ensuremath{\mathit{Parity}}}
\newcommand{\col}{\ensuremath{\mathit{Col}}}
\renewcommand{\th}{\ensuremath{\mathit{Th}}}
\NewDocumentCommand{\thresh}{ggg}{\th_{#2}^{\IfNoValueTF{#3}{}{#3}}(#1)} 
\NewDocumentCommand{\thadmo}{mgg}{\ensuremath{\mathit{Th}^{\mathsf{adm}_{\IfNoValueTF{#3}{\spec_2}{#3}}}_{\IfNoValueTF{#2}{\spec_1}{#2}}(#1)}}
\NewDocumentCommand{\thadmt}{mgg}{\ensuremath{\mathit{Th}^{\mathsf{adm}_{\IfNoValueTF{#2}{\spec_1}{#2}}}_{\IfNoValueTF{#3}{\spec_2}{#3}}(#1)}}
\NewDocumentCommand{\thadmi}{mgg}{\ensuremath{\mathit{Th}^{\mathsf{adm}_{\IfNoValueTF{#2}{\spec_{3-i}}{#2}}}_{\IfNoValueTF{#3}{\spec_i}{#3}}(#1)}}
\newcommand{\Tenders}{\mathcal{T}}
\NewDocumentCommand{\config}{gg}{\tup{\ensuremath{\IfNoValueTF{#1}{v}{#1}, \IfNoValueTF{#2}{B}{#2}}}}
\NewDocumentCommand{\stratresstrict}{gg}{\IfNoValueTF{#1}{\sigma}{#1}|_{\IfNoValueTF{#2}{h}{#2}}}
\NewDocumentCommand{\optrange}{g}{\ensuremath{\textsf{Opt}(\IfNoValueTF{#1}{B}{#1})}} 
\newcommand{\Ass}{A}
\newcommand{\AG}[3]{\langle #1 \triangleright #2 \triangleright #3\rangle}
\NewDocumentCommand{\func}{ggg}{f_{#2}^{\IfNoValueTF{#3}{}{#3}}(#1)} 
\newcommand{\red}{\mathsf{red}}
\newcommand{\blue}{\mathsf{blue}}
\spnewtheorem{myproblem}{Problem}{\bfseries}{\itshape}
\def\orcid#1{\kern .08em\href{https://orcid.org/#1}{\includegraphics[keepaspectratio,width=0.7em]{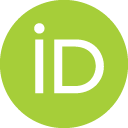}}}
\begin{document}

\title{Auction-Based Scheduling}
\author{Guy Avni\inst{1}\orcid{0000-0001-5588-8287} \and
		Kaushik Mallik\inst{2}\orcid{0000-0001-9864-7475} \and
		Suman Sadhukhan\inst{1}\orcid{0000-0002-4802-6803}}
			
\authorrunning{Avni et al.}

\institute{University of Haifa, Haifa, Israel,\\
			\email{gavni@cs.haifa.ac.il},\\
			\email{ssadhukh@campus.haifa.ac.il}\\
			\and
			Institute of Science and Technology Austria (ISTA), Klosterneuburg, Austria,\\
			\email{kaushik.mallik@ist.ac.at}
}

	\maketitle
	
	\begin{abstract}
Sequential decision-making tasks often require satisfaction of multiple, partially-contradictory objectives.
		Existing approaches are monolithic, where a single {\em policy} 
		fulfills all objectives.
		We present \emph{auction-based scheduling}, a \emph{decentralized} framework for multi-objective sequential decision making. 
		Each objective is fulfilled using a separate and independent policy. Composition of policies is performed at runtime, where at each step, the policies simultaneously bid from pre-allocated budgets for the privilege of choosing the next action. The framework allows policies to be independently created, modified, and replaced. 
We study path planning problems on finite graphs with two temporal objectives and present algorithms to synthesize policies together with bidding policies in a  decentralized manner. 
We consider three categories of decentralized synthesis problems, parameterized by the assumptions that the policies make on each other. We identify a class of assumptions called {\em assume-admissible} for which synthesis is always possible for graphs whose every vertex has at most two outgoing edges. 
	\stam{
	Many sequential decision-making tasks require satisfaction of multiple, partially contradictory objectives.
		Existing approaches are monolithic, namely all objectives are fulfilled using a single {\em policy}, which is a function that selects a sequence of {\em actions}.
		We present \emph{auction-based scheduling}, the first \emph{distributed} framework for multi-objective sequential decision-making. 
		Each objective is fulfilled using a separate and independent policy. The challenge in composing policies is that they may choose conflicting actions at a given time. We resolve conflicts using a novel auction-based mechanism: We allocate a bounded budget to each policy, and at each step, the policies simultaneously bid for the privilege of choosing the next action. The framework allows policies to be independently created, modified, and replaced. 
We study path planning problems on finite graphs with two temporal objectives and present algorithms to synthesize policies together with bidding policies in a  decentralized manner. 
		We consider three categories of decentralized synthesis problems, parameterized by the assumptions that the policies make on each other:
		(a)~{\em strong synthesis}, with no assumptions and the strongest guarantees, 
		(b)~{\em assume-admissible synthesis}, with the weakest rationality assumptions, and 
		(c)~\emph{assume-guarantee synthesis}, with explicit contract-based assumptions. 
		For reachability objectives, we show that, surprisingly, decentralized assume-admissible synthesis is always possible when the out-degrees of all vertices in the graph are at most two. 	
		}
	\end{abstract}


\section{Introduction}

Sequential decision-making tasks often require satisfaction of multiple, partially-contradictory objectives.
For example, the control {\em policy} of a traffic light may need to choose signals in a way that the traffic throughput is maximized \emph{while} the maximum waiting time is minimized~\cite{houli2010multiobjective}, 
the control policy operating an unmanned aerial vehicle may need to navigate in a way that the destination is reached \emph{while} no-fly zones are avoided~\cite{hahn2023multi},
the policy of an operating-system resource manager needs to allocate resources to tasks in a way that deadlocks are avoided \emph{while} fairness is maintained~\cite{de2005code}.

We propose a decentralized synthesis framework for policies when tasks are given as a conjunction of two objectives $\Phi_1$ and $\Phi_2$, and the policies need to choose actions from a common action space. 
The key idea is that $\spec_1$ and $\spec_2$ will be accomplished, respectively, using two {\em action policies} $\actpol_1$ and $\actpol_2$---designed independently, and the composition of $\actpol_1$ and $\actpol_2$ at runtime will generate a policy for $\Phi_1 \wedge \Phi_2$. 
The challenge is that at each time point, one action needs to be chosen, whereas $\actpol_1$ and $\actpol_2$ might select conflicting actions. 
For example, when developing a plan for a robot, $\Phi_1$ and $\Phi_2$ might specify two target locations, and $\actpol_1$ and $\actpol_2$ may select opposite directions in a location. 

We propose a novel composition mechanism called \emph{auction-based scheduling}: 
both policies are allocated bounded monetary \emph{budgets}, and at each point in time, an auction (aka {\em bidding}) is held, where the policies {\em bid} from their budgets for the privilege to get scheduled for choosing the action. More formally, we equip each action policy $\actpol_i$, for $i \in \set{1,2}$, with a {\em bidding policy} $\bidpol_i$, which is a function that proposes a bid from the available budget based on the history of the interaction. 
A {\em tender} for objective $\Psi$ is a triple  $\t = \zug{\actpol, \bidpol, \B}$, where $\actpol$ is an {\em action policy}, $\bidpol$ is a {\em bidding policy}, and $\B \in (0,1)$ is a minimal budget required for the tender to guarantee $\Psi$. Two tenders $\t_1$ and $\t_2$ are {\em compatible} if $\B_1 + \B_2 < 1$, which is when they can be composed at runtime as follows. 
Each Tender~$i$, for $i \in \set{1,2}$, is allocated an initial budget that exceeds $\B_i$, where the sum of budgets equals $1$. At each point in time, the tenders simultaneously choose bids using their bidding policies, the higher bidder chooses an action using its action policy and pays the bid to the other tender. Thus, the sum of budgets stays constant at $1$. Note that the composition gives rise to a path in the graph. 
The decentralized synthesis problem asks:
Given a graph $\G$ and objectives $\spec_1,\spec_2$ such that $\spec_1\land\spec_2\neq \mathtt{false}$, for each $\spec_i$ compute $\t_i$ such that no matter which tender it is composed with, the composition generates a path that fulfills $\spec_i$. The framework is sound-by-construction, namely the composition of compatible tenders satisfies $\spec_1\land\spec_2$.

The advantage of auction-based scheduling is modularity at two levels.
First, since the designs of policies do not depend on each other, they can be created independently and in parallel, e.g., by different vendors or in a parallel computation.
Second, since the policies operate independently, they can be modified and replaced separately. For example, when only the objective $\Phi_1$ changes, there is no need to alter the policy $\actpol_2$, and vice versa. 

Bidding for the next action encourages the policy with higher scheduling urgency to bid higher, and at the same time, the bounds on budgets 
ensure {\em fairness}, namely that no policy is starved. 
Auction-based scheduling adds new, complementary features to the arsenal of modular approaches in multi-objective decision-making.
With the conventional decentralized synthesis approaches, the policies are composed either \emph{concurrently}~\cite{majumdar2020assume} or in a \emph{turn-based} manner~\cite{brenguier2015assume}.
Concurrent actions are meaningful if each policy needs to act on its own local control variables, e.g., when the local control policies of two robots concurrently move the robot towards their destinations in a shared workspace.
In our case, the set of actions is common between policies, and the concurrent interaction is unsuitable.
Likewise, turn-based actions are also unsuitable in our setting because it is unclear how to assign turns to policies apriori.
We will demonstrate (Ex.~\ref{ex:turn-based fails}) that an inappropriate turn-assignment to policies may violate some of the objectives, while auction-based scheduling will succeed to fulfil all of them.

\stam{
We introduce \emph{tenders} which extend regular path planning policies with bidding abilities.
Each tender requires a minimum amount of initial budget share---called the \emph{threshold budget}---to be able to fulfill its objective; we normalize the total budget share to $1$, and therefore the threshold budget is a constant in $[0,1]$.
A pair of tenders $\t_1$ and $\t_2$ are composed using auction-based scheduling that works as follows.
The tenders start at a designated initial vertex in $\G$ with initial budgets above their thresholds.
At each step, the tenders simultaneously and independently make bids not exceeding their current budgets.
Whoever bids higher pays the bid amount to the other tender from its budget, and chooses the next edge for extending the current path.
The process continues and we obtain an infinite path over the graph.

The crucial criterion for composition is \emph{compatibility} between tenders, which requires that the sum of the threshold budgets of $\t_1$ and $\t_2$ is less than the total available budget $1$.
The decentralized synthesis problem asks:
Given a graph $\G$ and objectives $\spec_1,\spec_2$ such that $\spec_1\land\spec_2\neq \mathtt{false}$, for each $\spec_i$ compute $\t_i$ such that no matter what $\t_{3-i}$ does, their composition generates a path that fulfills $\spec_i$.
If $\t_1$ and $\t_2$ obtained this way are also compatible, then their composition will fulfill $\spec_1\land\spec_2$.
Sometimes, no compatible pair of tenders can be found, unless the tenders make some assumptions on each other.
}

We study auction-based scheduling in the context of path planning on finite directed graphs with pairs of $\omega$-regular objectives on its paths, and present algorithms for the decentralized synthesis problem with increasing levels of assumptions made by the tenders on each other:%
~(a) Strong synthesis, with no assumptions and the most robust solution, (b)~assume-admissible synthesis, with the assumption that the other tender is not purely cynical and behaves rationally with respect to its own objective, and (c)~assume-guarantee synthesis, with explicit contract-based pre-coordination.
We show 	for graphs whose every vertex has at most two outgoing edges, for every pair of $\omega$-regular objectives $\spec_1,\spec_2$, and for all three classes of problems (a), (b), and (c), there exist PTIME decentralized synthesis algorithms that either compute compatible tenders or output that no compatible tenders with the respective assumptions exist; surprisingly, we show that compatible tenders always exist for (b).
For general graphs, we show that the problems are in $\text{NP}\cap\text{coNP}$. All our algorithms internally solve \emph{bidding games} using known algorithms from the literature~\cite{LLPU96,LLPSU99}.
Due to the lack of space, some proofs are omitted, but can be found in the extended version~\cite{AMS23}. 

\section{Preliminaries}

Let $\Sigma$ be a finite alphabet. We use $\Sigma^*$ and $\Sigma^\omega$ to respectively denote the set of finite and infinite words over $\Sigma$, and $\Sigma^\infty$ to denote $\Sigma^*\cup \Sigma^\omega$.
Let for two words $u\in\Sigma^*$ and $v\in\Sigma^\infty$, $u\leq v$ denote that $u$ is a prefix of $v$, i.e., there exists a $w$ such that $v = uw$.
Given a language $L\subseteq \Sigma^\infty$, define $\pref(L)$ to be the set of every finite prefix in $L$, i.e., $\pref(L)\coloneqq \set{u\in \Sigma^*\mid \exists v\in L\;.\; u\leq v}$.

\paragraph*{Graphs.}
We formalize path planning problems on \emph{graphs}.
A graph $\G$ is a tuple $\tup{\V,\vinit,\E}$ where $\V$ is a finite set of vertices, $\vinit$ is a designated initial vertex, and $E\subseteq V\times V$ is a set of directed edges.
If $(u,v)\in \E$, then $v$ is a \emph{successor} of $u$.
A \emph{binary graph} is a graph whose every vertex has at most two successors.
A \emph{path} over $\T$ is a sequence of vertices $v^0v^1\ldots$ so that every $(v^i,v^{i+1})\in \E$.
Unless explicitly mentioned, paths always start at $v^0$.
We use $\finpaths(\T)$ and $\infpaths(\T)$ to denote the sets of finite and infinite paths, respectively.

A \emph{strongly connected component} (SCC) of the graph $\T$ is a set $S$ of vertices, such that there is a path between every pair of vertices of $S$.
An SCC $S$ is called a \emph{bottom strongly connected component} (BSCC) if there does not exist any edge from a vertex in $S$ to a vertex outside of $S$.
The graph $\T$ is itself called strongly connected if $\V$ is an SCC.

\paragraph*{Objectives.} 
Fix a graph $\G$.
An \emph{objective} $\spec$ \emph{in} $\G$ is a set of infinite paths, i.e., $\spec\subseteq \infpaths(\T)$.
For an infinite path $\rho$, we use $\rec(\rho)$ to denote the set of vertices that $\rho$ visits infinitely often.
We focus on the following objectives:
\begin{description}
	\item[Reachability:] for $S\subseteq \V$, $\reach_\T(S)\coloneqq \set{\vinit v^1\ldots \in \infpaths(\T) \mid \exists i\geq 0\; . \; v^i\in S}$,
	\item[Safety:] for $S\subseteq \V$, $\safe_\T(S)\coloneqq \set{\vinit v^1\ldots \in \infpaths(\T) \mid \forall i\geq 0\; . \; v^i\in S}$,
	\item[B\"uchi:] for $S\subseteq \V$, $\buchi_\T(S)\coloneqq \set{\rho \in \infpaths(\T)\mid \rec(\rho)\cap S\neq \emptyset}$,
	\item[Parity (max, even):] for $\col\colon \V\to [0;k]$ for some $k>0$, $\parity_\G(\col)\coloneqq \set{\rho\in\infpaths(\G)\mid \max\set{i\mid \exists v\in \rec(\rho)\;.\;\col(v)=i} \text{ is even}}$,
\end{description}
Given an objective $\spec$, we will use $\cmpl{\spec}$ to denote its complement, i.e., $\cmpl{\spec} = \infpaths(\G)\setminus \spec$.
Observe that $\cmpl{(\reach_\T(S))} = \safe_\T(V\setminus S)$.

\paragraph*{Action policies.}
Fix a graph $\T$.
An \emph{action policy} is a function $\actpol\colon \finpaths(\T)\to \V$, choosing the next vertex to extend any given finite path $\rho v$, where $\tup{v, \actpol(\rho v)} \in E$.
The action policy $\actpol$ is \emph{memoryless} if for every pair of distinct finite paths $\rho v,\rho'v$ that end in the same vertex $v$, it holds that $\actpol(\rho v) = \actpol(\rho' v)$; in this case we simply write $\actpol(v)$.
An action policy $\actpol$ generates a unique infinite path in $\T$, denoted $\path(\actpol)$, and defined inductively as follows. The initial vertex is $\vinit$. For every prefix $\vinit, \ldots, v^i$ of $\path(\actpol)$, for $i \geq 0$, $v^{i+1} = \actpol(\vinit, \ldots, v^i)$. 
We say that the policy $\actpol$ \emph{satisfies} a given objective $\spec$, written $\actpol\models \spec$ iff $\path(\actpol)\in \spec$.


\section{The Auction-Based Scheduling Framework}
\label{sec:auction-based scheduling}

Consider a graph $\G=\tup{\V,\vinit,\E}$.
A pair of objectives $\spec_1,\spec_2 \subseteq V^\omega$ in $\G$ are called \emph{overlapping} if they have nonempty intersection (i.e., $\spec_1 \cap \spec_2 \neq \emptyset$).
The \emph{multi-objective planning problem} asks to synthesize an action policy that satisfies the \emph{global} objective $\spec_1 \cap \spec_2$ for overlapping $\spec_1,\spec_2$. 

We propose a decentralized approach to the problem. 
Our goal is to design two action policies $\actpol_1$ and $\actpol_2$ for $\spec_1$ and $\spec_2$, respectively. We will equip each action policy with a {\em bidding policy}, which it will use at runtime to bid for choosing the action at each time point. 
We formalize this below.

\stam{
We propose a decentralized approach to the problem. 
Our goal is to design two \emph{local} action policies $\actpol_1$ and $\actpol_2$ for $\spec_1$ and $\spec_2$, respectively.
Understandably, $\actpol_1$ and $\actpol_2$ may choose conflicting successors for extending a given finite path.
In order to decide whose choice to accept, an auction (aka bidding) is held, where the policies bid for the privilege of being \emph{scheduled} and choosing the next vertex. 
To add the bidding ability, we instrument each action policy with a \emph{bidding policy} and an initial \emph{budget} allocation, where the sum of two budgets is always normalized to $1$. 
At each point, the bidding policies simultaneously propose their bids not exceeding their current budgets. Whoever bids higher chooses the next vertex and ``pays'' the bid amount to the other policy, thereby updating budgets of both policies. We formalize the framework below. 
}


\begin{definition}[Bidding policies]
A \emph{bidding policy} is a function $\bidpol\colon \V\times [0,1]\to [0,1]$ with the constraint that $\bidpol(v,B)\leq B$ for every vertex $v$ and every amount of available budget $B\in [0,1]$.
\end{definition}

We equip a pair of action and bidding policies with a \emph{threshold budget}, which represents the greatest lower bound on the initial budget needed for the policies to guarantee their objective, and we call the resulting triple a {\em tender}. 

\stam{
Bidding policies are ineffective without sufficient budget.
We introduce \emph{tenders}, each of which encapsulates an action policy, a bidding policy, and the \emph{threshold budget} which represents the greatest lower bound on the initial available budget for the bidding and action policies to fulfill their objective.
The goal of our decentralized framework is to independently synthesize a pair of {\em tenders} which fulfill each objective separately.
}

\begin{definition}[Tenders]
	A \emph{tender} for a given graph $\T$ is a triple $\tup{\actpol,\bidpol,\B}$ of an action policy $\actpol$, a bidding policy $\bidpol$, and a \emph{threshold budget} $\B\in [0,1]$. The set of all tenders for $\G$ is denoted $\Tenders^\G$. 
A tender $\t$ \emph{satisfies} an objective $\spec$, denoted $\t\models \spec$, iff $\actpol\models \spec$ (i.e., when the tender is operating alone on the graph). 
\end{definition}

Next, we formalize the composition of two tenders at runtime, which produces an action policy that uses a register of memory to keep track of the available budgets. We introduce some notation. A {\em configuration} is a pair $\tup{v,B_1}$, where $v$ is a vertex and $B_1$ is the budget available to the first tender. We normalize the sum of budgets to $1$, hence implicitly, the budget available to the second tender is $B_2=1-B_1$.
Let $\C = \V\times [0,1]$ be the set of all configurations. For a given sequence of configurations $s = (v^0,B_1^0)(v^1,B_1^1)\ldots\in \C^\infty$, let $\proj{\V}{s}$ denote the path $v^0v^1\ldots$. 
A \emph{history} is a finite sequence of configurations $\tup{\vinit,B_1^0}\ldots\tup{v^k,B_1^k}\in \C^*$ with the constraint that $\proj{\V}{s}\in \finpaths(\G)$. Let $\H$ be the set of all histories.

\stam{
A single isolated tender is uninteresting.
We take two tenders and compose them via auction-based scheduling.
The composition produces an action policy that uses a register of memory to keep track of the available budgets. We introduce some notation. A {\em configuration} is a pair $\tup{v,B_1}$, where $v$ is a vertex and $B_1$ is the budget available to the first tender. We normalize the sum of budgets to $1$, hence implicitly, the budget available to the second tender is $B_2=1-B_1$.
Let $\C = \V\times [0,1]$ be the set of all configurations, and for a given sequence of configurations $s = (v^0,B_1^0)(v^1,B_1^1)\ldots\in \C^\infty$, let $\proj{\V}{s}$ denote the path $v^0v^1\ldots$. 
A \emph{history} is a finite sequence of configurations $\tup{\vinit,B_1^0}\ldots\tup{v^k,B_1^k}\in \C^*$ with the constraint that $\proj{\V}{s}\in \finpaths(\G)$. Let $\H$ be the set of all histories.
The composition of the tenders takes every history $hc \in \H$ ending at $c \in \C$, and extends it with the next configuration $c' = (v',B_1')$ obtained as follows.
The configuration $c$ is fed to the two bidding policies to obtain two bids, the highest bidder is determined, $\proj{\V}{hc}$ is fed to the highest bidder's action policy to obtain $v'$, and the bid amount is transferred from the highest bidder's budget to the lowest bidder's budget, updating the first tender's budget to $B_1'$. 
}

\begin{definition}[Composition of tenders]\label{def:composition}
	Let $\G$ be a graph, and $\t_1 = \tup{\actpol_1,\bidpol_1,\B_1}$ and $\t_2=\tup{\actpol_2,\bidpol_2,\B_2}$ be two tenders.
	The tenders $\t_1$ and $\t_2$ are \emph{compatible} iff $\B_1+\B_2 < 1$.
	If compatible, then their \emph{composition}, denoted $\compOT$, is a function $\comp{\t_1}{\t_2}\colon \H\to \C$ defined as follows. 
	Given a history $h=\tup{\vinit,B_1^0}\ldots\tup{v^k,B_1^k}\in \H$, let  $b_1\coloneqq\bidpol_1(v^k,B_1^k)$ and $b_2\coloneqq\bidpol_2(v^k,1-B_1^k)$. Then, 
	\begin{itemize}
		\item if $b_1\geq b_2$, then $\comp{\t_1}{\t_2}(h) = \left( \actpol_1(\rho v), B_1-b_1\right)$, and
		\item if $b_1 < b_2$, then $\comp{\t_1}{\t_2}(h)= \left(\actpol_2(\rho v), B_1+b_2\right)$.
	\end{itemize}
	Given an initial configuration $\tup{\vinit,B_1^0}$ with $B_1^0>\B_1$ and $B_2^0=1-B_1^0>\B_2$, the composition outputs an infinite sequence of configurations, denoted $\out(\compOT)$, where $\out(\compOT)\coloneqq \tup{\vinit,B_1^0}\tup{v^1,B_1^1}\ldots\in \C^\omega$ such that for every $k$, $\tup{v^k,B_1^k} = \compOT\left(\tup{\vinit,B_1^0}\ldots \tup{v^{k-1},B_1^{k-1}}\right)$.
	We will say $\compOT$ satisfies a given objective $\spec$, written $\compOT\models\spec$, iff $\proj{\V}{\out(\compOT)}\in \spec.$
\end{definition}
We will often use the index $i \in \set{1,2}$ to refer to either of the two tenders or their attributes, and will use $-i = 3-i$ for the ``other'' one, e.g., $\t_i$ and $\t_{-i}$. 
Notice the difference between $\B_i$ and $B_i^0$: $\B_i$ is the threshold budget at $\vinit$ which is a constant attribute of $\t_i$, whereas $B_i^0$ is the actual budget initially allocated to $\t_i$ whose value can be anything above $\B_i$.

\subsection{Classes of decentralized synthesis problem}
In this section, we describe three classes of decentralized synthesis problems that we study. 
Throughout this section, fix a graph $\G$ and a given pair of overlapping objectives $\spec_1$ and $\spec_2$.

\stam{
, we informally state the three classes of decentralized synthesis problem that we will study.
\begin{description}
	\item[Strong decentralized synthesis:] For each $i\in\set{1,2}$, design a $\t_i$ such that for every compatible $\t_{-i}$, $\comp{\t_i}{\t_{-i}}\models\spec_i$;
	\item[Assume-admissible decentralized synthesis:] For each $i\in\set{1,2}$, design a $\t_i$ such that for every  compatible \emph{admissible} $\t_{-i}$, $\comp{\t_i}{\t_{-i}}\models\spec_i$, where admissibility is an established formalization of rationality~\cite{berwanger2007admissibility,brenguier2014complexity};
	\item[Assume-guarantee decentralized synthesis:] Let $\Ass_1,\Ass_2\subseteq \V^\omega$ be a pair of languages, called the \emph{assumptions}.
	For each $i\in\set{1,2}$, design a $\t_i$ such that for every compatible $\t_{-i}$, at any point if $\t_{-i}$ has fulfilled $\Ass_{-i}$ in the past, then  $\t_i$ fulfills $\Ass_i$ in return, and additionally ensures $\comp{\t_i}{\t_{-i}}\models\spec_i$.
\end{description}
In Sec.~\ref{sec:illustrative examples}, we first illustrate the above synthesis problems and their solutions on simple examples.
Afterwards, we will formalize the above synthesis problems, and for each of them, we will identify the classes of problem instances for which a solution exists.
We will observe that strong decentralized synthesis provides the most robust solutions, but the tenders usually require higher threshold budgets, and compatible tenders may not always exist.
Assume-guarantee decentralized synthesis provides the least robust solutions and requires direct synchronization of the tenders through assumptions, but with the right type of contracts they can always be found.
Assume-admissible decentralized synthesis strikes a balance in terms of robustness and chances of success; most importantly, we show that for reachability objectives in binary graphs, a solution can always be found. 
}

\noindent{\bf Strong decentralized synthesis.}
Here, tenders make no assumptions on each other, thus the solutions provide the strongest (the most robust) guarantees. 
Formally, for each $i\in\set{1,2}$, the goal is to construct $\t_i$ such that for every compatible $\t_{-i}$, we have $\comp{\t_i}{\t_{-i}}\models\spec_i$.

\tikzset{%
glow/.style={%
preaction={#1, draw, line join=round, line width=0.5pt, opacity=0.04,
preaction={#1, draw, line join=round, line width=1.0pt, opacity=0.04,
preaction={#1, draw, line join=round, line width=1.5pt, opacity=0.04,
preaction={#1, draw, line join=round, line width=2.0pt, opacity=0.04,
preaction={#1, draw, line join=round, line width=2.5pt, opacity=0.04,
preaction={#1, draw, line join=round, line width=3.0pt, opacity=0.04,
preaction={#1, draw, line join=round, line width=3.5pt, opacity=0.04,
preaction={#1, draw, line join=round, line width=4.0pt, opacity=0.04,
preaction={#1, draw, line join=round, line width=4.5pt, opacity=0.04,
preaction={#1, draw, line join=round, line width=5.0pt, opacity=0.04,
preaction={#1, draw, line join=round, line width=5.5pt, opacity=0.04,
preaction={#1, draw, line join=round, line width=6.0pt, opacity=0.04,
}}}}}}}}}}}}}}
\tikzset{every state/.style={minimum size=0pt}}
\tikzset{%
target/.style={
preaction={#1, fill, opacity=0.5}
}}
\tikzset{cross/.style={
		preaction={#1, draw, cross out}}}

\begin{figure}
	\centering
	\begin{subfigure}[t]{0.3\textwidth}
		\centering
		\begin{tikzpicture}[node distance=0.7cm]
			\node[state,initial below]		(a)	at	(0,0)				{$a$};
			\node[state]		(b)	[left=of a]		{$b$};
			\node[state,target=blue]		(c)	[above=of b]		{$c$};
			\node[state,target=blue,target=red]		(d)	[below=of b]		{$d$};
			
			\node[state]		(e)	[right=of a]		{$e$};
			\node[state,target=red]		(f)	[above=of e]		{$f$};
			\node[state,target=blue]		(g)	[below=of e]		{$g$};
			
			\path[->,ultra thick]
				(a)		edge	[glow=blue]		(b)
				(b)		edge	[glow=red]		(d)
			;
						
			\path[->]
				(a)		edge		(e)
				(b)		edge			(c)
				(e)		edge	[glow=red]		(f)
						edge	[glow=blue]		(g);
		\end{tikzpicture}
		\caption{Strong}
		\label{fig:motivating example:robust}
	\end{subfigure}
	\begin{subfigure}[t]{0.3\textwidth}
		\centering
		\begin{tikzpicture}[node distance=0.7cm]
			\node[state,initial below]		(a)	at	(0,0)				{$a$};
			\node[state]		(b)	[left=of a]		{$b$};
			\node[state]		(c)	[above=of b]		{$c$};
			\node[state,target=blue,target=red]		(d)	[below=of b]		{$d$};
			
			\node[state]		(e)	[right=of a]		{$e$};
			\node[state,target=red]		(f)	[above=of e]		{$f$};
			\node[state,target=blue]		(g)	[below=of e]		{$g$};
			
			\path[->,ultra thick]
				(a)		edge	[glow=blue,glow=red]		(b)
				(b)		edge	[glow=red,glow=blue]		(d)
			;
						
			\path[->]
				(a)		edge		(e)
				(b)		edge			(c)
				(e)		edge	[glow=red]		(f)
						edge	[glow=blue]		(g);
		\end{tikzpicture}
		\caption{Assume-admissible}
		\label{fig:motivating example:admissible}
	\end{subfigure}
	\begin{subfigure}[t]{0.3\textwidth}
		\centering
		\begin{tikzpicture}[node distance=0.7cm]
			\node[state,initial below]		(a)	at	(0,0)				{$a$};
			\node[state]		(b)	[left=of a]		{$b$};
			\node[state,target=blue]		(c)	[above=of b]		{$c$};
			\node[state,target=blue,target=red]		(d)	[below=of b]		{$d$};
			
			\node[state]		(e)	[right=of a]		{$e$};
			\node[state,target=red]		(f)	[above=of e]		{$f$};
			\node[state,target=blue]		(g)	[below=of e]		{$g$};
			
			\path[->,ultra thick]
				(a)		edge	[glow=blue,glow=red]		(b)
				(b)		edge	[glow=red,glow=blue]		(d)
			;
						
			\path[->]
				(a)		edge		(e)
				(b)		edge			(c)
						edge		(f)
				(e)		edge	[glow=red]		(f)
						edge	[glow=blue]		(g);
		\end{tikzpicture}
		\caption{Assume-guarantee}
		\label{fig:motivating example:contract}
	\end{subfigure}
	\caption{Graphs with two reachability objectives given by targets: $T_\blue$, depicted in blue, $T_\red$ depicted in red, and $T_\blue \cap T_\red$ depicted in purple. The action policies of the red and blue tenders choose edges with, respectively,  red and blue shadows (shared edges are in purple). If no edges from a vertex have red or blue shadow, then the respective tender is indifferent about the choice made. Thick edges depict the paths taken by the compositions of tenders.}
	\label{fig:motivating example}
\end{figure}
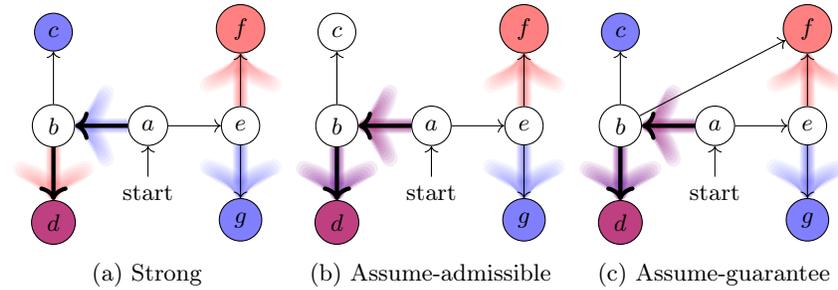
\begin{example}\label{ex:motivation:strong}
Consider the graph depicted in Fig.~\ref{fig:motivating example:robust} with a pair of {\em reachability} objectives having the targets $T_{\blue} = \set{c,d,g}$ and $T_{\red}=\set{d,f}$, respectively.  Their intersection $\set{d}$ is depicted in purple.
We present a pair of robust tenders $\t_{\blue}$ and $\t_{\red}$ with $\B_{\blue} = \sfrac{1}{4}$ and $\B_{\red}=\sfrac{1}{2}$, so that $\t_{\blue}$ and ${\t_{\red}}$ are compatible. We will show that $\t_{\blue}$ guarantees that no matter which compatible tender it is composed with, eventually $T_{\blue}$ is reached, and similarly $\t_{\red}$ ensures that $T_{\red}$ is reached. Therefore, $\comp{\t_{\blue}}{\t_{\red}}$ ensures that $d$ is reached.

We first describe $\t_{\blue}$. Consider an initial configuration $\zug{a, \sfrac{1}{4} + \epsilon}$, for any $\epsilon > 0$. Note that the other tender's budget is $\sfrac{3}{4}-\epsilon$. The first action of $\t_\blue$ is $\zug{b, \sfrac{1}{4}}$. There are two possibilities. First, $\t_{\blue}$ wins the bidding, then we reach the configuration $\zug{b, \epsilon}$, and since both successors of $b$ are in $T_{\blue}$, the objective is satisfied in the next step. Second, $\t_{\blue}$ loses the bidding, meaning that the other tender bids at least $\sfrac{1}{4}$, and in the worst case, we proceed to the configuration $\zug{e, \sfrac{1}{2}+\epsilon}$. Next, $\t_\blue$ chooses $\zug{g, \sfrac{1}{2}}$ and necessraily wins as $\t_\red$'s budget is only $\sfrac{1}{2} - \epsilon$, and we reach $g \in T_{\blue}$. We stress that $\t_\blue$ can be entirely oblivious about $\t_\red$, except for the implicit knowledge of $\t_\red$'s budget.

We describe $\t_{\red}$. Consider an initial configuration $\zug{a, \sfrac{1}{2} + \epsilon}$, for any $\epsilon > 0$. 
Initially, $\t_{\red}$ bids $0$, because it does not have a preference between going left or right. In the worst case, the budget stays $\sfrac{1}{2} + \epsilon$ in the next turn. 
Since both $b$ and $e$ have single successors in $T_{\red}$, thus $\t_\red$ must win the bidding. It does so by bidding $\sfrac{1}{2}$, which exceeds the available budget $\sfrac{1}{2}-\epsilon$ of $\t_\red$.\hfill$\triangle$
\end{example}

We now use the same problem as in Ex.~\ref{ex:motivation:strong}, and show that the conventional turn-based interaction may fail to fulfill both objectives.

\begin{example}\label{ex:turn-based fails}
	Consider again the graph depicted in Fig.~\ref{fig:motivating example:robust} with the targets $T_{\blue} = \set{c,d,g}$ and $T_{\red}=\set{d,f}$.
	Suppose $\actpol_\blue$ and $\actpol_\red$ are the two respective action policies, and we arbitrarily decide to make their interaction turn-based, where $\actpol_\red$ chooses actions in $a$ and $\actpol_\blue$ chooses actions in $b$ and $e$.
	It is clear that no matter which edge $\actpol_\red$ chooses from $a$, it cannot guarantee satisfaction of $T_\red$, because $\actpol_\blue$ can take the game to $c$ or $g$ depending on $\actpol_\red$'s choice.
	\hfill$\triangle$

\end{example}

\stam{
Our algorithms for decentralized synthesis internally solve zero-sum {\em bidding games}~\cite{LLPU96,LLPSU99}. 
A bidding game is played on a finite graph by two players called \PO and \PT. 
The game proceeds as follows. The players have bounded budgets, and in each turn, both players bid for the right to take an action. 
A {\em strategy} in a bidding game, 
using our terminology, is a pair $\zug{\actpol, \bidpol}$ of action and bidding policies. The central question is to find a {\em winning strategy}, namely a strategy that guarantees winning, with respect to a given objective, no matter which strategy the opponent follows. 
For $\omega$-regular objectives, it is known~\cite{LLPU96,LLPSU99,AHC19} that each vertex $v$ 
can be associated with a {\em threshold} $\th(v)$ such that when \PO's budget at $v$ is $B_X > \th(v)$, \PO has a winning strategy, and when $B_X < \th(v)$, \PT has a winning strategy.

Our algorithm to solve strong decentralized synthesis proceeds as follows. 
Given a graph $\G$ with an initial vertex $\vinit$ and two objectives $\spec_1$ and $\spec_2$, we solve two zero-sum bidding games $\G_1$ and $\G_2$, independently. Both games are played on the graph $\G$. In $\G_1$, \PO's objective is $\spec_1$ and in $\G_2$, his objective is $\spec_2$. 
Let $\th_1(\vinit)$ and $\th_2(\vinit)$ respectively denote the thresholds at $\vinit$ in $\G_1$ and $\G_2$. We show that a pair of robust tenders exist iff $\th_1(\vinit)+\th_2(\vinit)<1$. When they exist, for $i \in \set{1,2}$, we output the tender $\t_i = \zug{\actpol_i, \bidpol_i, \th_i(\vinit)}$, where the pair $\zug{\actpol_i, \bidpol_i}$ is a winning \PO strategy in the game $\G_i$. Since $\t_i$ is a winning strategy, it ensures that objective $\spec_i$ is satisfied when composed with {\em any} compatible strategy. Thus, necessarily, the composition $\comp{\t_1}{\t_2}$ of two tenders in $\G$ ensures that both objectives are satisfied!
}

\noindent{\bf Assume-admissible decentralized synthesis.}
While the guarantees of strong decentralized synthesis are appealing, it often fails as each tender makes the pessimistic assumption that the other tender can behave arbitrarily---even adversarially. 
We consider \emph{admissibility}~\cite{brenguier2015assume} as a stronger assumption based on rationality, ensuring compatible tenders to exist even when strong synthesis may fail. 
We illustrate the idea in the following example.

\begin{example}\label{ex:motivation:assume-admissible}
Consider the graph in Fig.~\ref{fig:motivating example:admissible}, with reachability objectives given by targets $T_{\blue}=\set{d,g}$ and $T_{\red}=\set{d,f}$.
We argue that strong decentralized synthesis is not possible. Indeed, using the same reasoning for $\t_\red$ in Ex.~\ref{ex:motivation:strong}, we have $\th_\blue(a) = \th_\red(a) = 0.5$. 
On the other hand, observe that when synthesizing $\t_\red$, since $c \notin T_\blue$, we know that a  ``rational''  $\t_\blue$---formally, {\em admissible} $\t_\blue$ (see Sec.~\ref{sec:assume-admissible})---will not proceed from $b$ to $c$, and we can omit the edge. In turn, the threshold in $a$ decreases to $\sfrac{1}{4}$ for both objectives. Since the sum of thresholds is now less than $1$, two compatible tenders can be obtained. \hfill$\triangle$
\end{example}

In general, we seek an {\em admissible-winning tender}, which ensures that its objective is satisfied when composed with any admissible tender. 
Admissible-winning tenders are modular because they can be reused provided that the set of admissible actions of the other tender remains unchanged. 
For example, even when vertex $g$ is added to the red target set, the blue tender can be used with no change. Somewhat surprisingly, we show that in graphs in which all vertices have out-degree at most $2$, assume-admissible decentralized synthesis is always possible, and a pair of admissible-winning tenders can be found in PTIME.

\noindent{\bf Assume-guarantee decentralized synthesis.}
Sometimes, even the admissibility assumption is too weak, and we need more direct synchronization of the tenders.
We consider assume-guarantee decentralized synthesis, where each tender needs to respect a pre-specified \emph{contract}, and as a result, their composition satisfies both objectives. We illustrate the idea below. 

\begin{example}\label{ex:motivation:assume-guarantee}
Consider the graph depicted in Fig.~\ref{fig:motivating example:contract}, with reachability objectives given by targets $T_{\blue}=\set{c,d,g}$ and $T_{\red}=\set{d,f}$. 
Here, the strong decentralized synthesis fails due to reasons similar to Ex.~\ref{ex:motivation:assume-admissible}.
The assume-admissible decentralized synthesis fails because from $e$, both objectives cannot be fulfilled, and from $b$, no matter which tender wins the bidding can use an admissible edge that violates the other objective (e.g., $(b,c)$ is admissible for $\t_\blue$ but violates $T_\red$).
We consider the {\em contract} $\tup{G_{\blue},G_{\red}}=\tup{\always\,\lnot c,\always\,\lnot f}$, which is satisfied when (a)~if $\actpol_{\blue}$ fulfills $G_{\blue}$, then $\actpol_{\red}$ fulfills $G_{\red}$, and 
(b)~if $\actpol_{\red}$ fulfills $G_{\red}$, then $\actpol_{\blue}$ fulfills $G_{\blue}$.
Now whichever tender wins the bidding at $b$ needs to fulfill its guarantee, because it cannot judge from the past interaction if the other tender violates its guarantee.
Therefore, from $b$, the next vertex will be $d$ under the contract, and using the same tenders from Ex.~\ref{ex:motivation:assume-admissible}, both objectives will be fulfilled.
\end{example}


\section{An Aside on Bidding Games on Graphs}
\label{sec:bidding games}

All our synthesis algorithms internally solve {\em bidding games}, which we briefly review here; see the survey~\cite{AH20} for more details. 
A (two-player) bidding game is played between \PO and \PT, and is a tuple $\zug{\G, \spec}$, where $\G = \zug{V, E}$ is the (finite, directed) graph and $\spec \subseteq V^\omega$ is the objective for \PO. 
The game is \emph{zero-sum}, meaning that the objective of \PT is $V^\omega \setminus \spec$, i.e., the violation of $\spec$.
This differs from auction-based scheduling where objectives overlap; otherwise, the interaction between \PO and \PT is the same as the one between tenders.
A {\em strategy} for a player is a pair $\zug{\actpol, \bidpol}$ where $\actpol$ is an action policy and $\bidpol$ is a bidding policy. As in the composition of tenders, two strategies and an initial configuration $\zug{v, B_1}$ give rise to an infinite sequence of configurations called a {\em play}. A strategy is {\em winning} if no matter which strategy the opponent follows, the play satisfies the player's objective. 
A central quantity in bidding games is the {\em threshold budget} in a vertex $v$, which is intuitively, a necessary and sufficient initial budget for \PO to guarantee winning. 

\begin{definition}[Threshold budgets]
Consider a bidding game $\zug{\G, \spec}$ with $\G=\tup{\V,\E}$. The {\em threshold} of \PO is given by $\th_\spec^\G: \V \rightarrow [0,1]$, where for every $v \in V$, we have $\th_\spec^\G(v) = \inf_B \set{\text{\PO has a winning strategy from } \zug{v, B}}$. 
\end{definition} 

The threshold of \PT is denoted as $\th_{\spec^c}^\G(v)$. The following theorem characterizes the structure of thresholds and states that the two players' thresholds are complementary.
We briefly describe the intuition in the below, following the theorem statement. 


\begin{restatable}[\cite{LLPU96}]{theorem}{thmRTreach}
\label{thm:RT-reach}
Consider a reachability bidding game $\tup{\G,\spec}$ where $\spec$ is $\reach_\G(T)$ where, without loss of generality, $T$ is a given set of sink vertices. For every vertex $v$, we have $\th_\spec^\G(v) = 1- \th_{\spec^c}^\G(v)$. Moreover, for every sink vertex $t$, we have $\th_\spec^\G(t) = 0$, if $t \in T$, and $\th_\spec^\G(t) = 1$ otherwise. For every vertex $v$, we have $\th_\spec^\G(v) = 0.5 \cdot (\th_\spec^\G(v^+) + \th_\spec^\G(v^-))$, where $v^-$ and $v^+$ are successors of $v$, such that for every other successor $u$, we have $\th_\spec^\G(v^-) \leq \th_\spec^\G(u) \leq \th_\spec^\G(v^+)$. 
Verifying if $\th_\spec^\G(v) > 0.5$ for a given vertex $v$ is in NP$\,\cap\,$coNP in general and is in PTIME for binary graphs. 
\end{restatable}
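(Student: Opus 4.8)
\textbf{Proof plan for Theorem~\ref{thm:RT-reach}.}
The plan is to establish the four claims in order: complementarity of the two players' thresholds, the boundary values at sinks, the averaging recurrence, and finally the complexity bounds.

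\emph{Complementarity and sink values.} For a sink $t$, if $t \in T$ then \PO wins from any configuration (the play already satisfies $\reach_\G(T)$), so $\th_\spec^\G(t) = 0$; if $t \notin T$, then no play from $t$ ever reaches $T$, so \PO wins from no configuration and $\th_\spec^\G(t) = 1$. In both cases $\th_\spec^\G(t) + \th_{\spec^c}^\G(t) = 1$. For the general complementarity claim, the standard argument is a determinacy-style dichotomy: I would show that for any $B$, exactly one of \PO and \PT has a winning strategy from $\zug{v,B}$ when $B \neq \th_\spec^\G(v)$, which forces $\th_\spec^\G(v) = 1 - \th_{\spec^c}^\G(v)$. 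The ``at most one'' direction is immediate since the game is zero-sum: two winning strategies would produce a play both satisfying and violating $\spec$. The ``at least one'' direction is where the real work lies and should be obtained simultaneously with the averaging recurrence below.

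\emph{The averaging recurrence.} I would prove, by constructing explicit strategies, that $\th_\spec^\G(v) = \tfrac12(\th_\spec^\G(v^-) + \th_\spec^\G(v^+))$ where $v^-, v^+$ are the successors with the least and greatest threshold. For the upper bound on $\th_\spec^\G(v)$: if \PO holds budget $B > \tfrac12(\th_\spec^\G(v^-)+\th_\spec^\G(v^+))$, she bids exactly $\tfrac12(\th_\spec^\G(v^+) - \th_\spec^\G(v^-))$; if she wins she moves to $v^-$ with remaining budget strictly above $\th_\spec^\G(v^-)$, and if she loses she is pushed to $v^+$ but collects the bid, ending with budget strictly above $\th_\spec^\G(v^+)$ --- in either case she is above threshold at the successor and continues inductively. (If $v$ has successors whose thresholds lie strictly between those of $v^-$ and $v^+$, being pushed there only helps her.) A symmetric bidding strategy for \PT gives the matching lower bound, i.e., that \PT wins from $\zug{v,B}$ whenever $B < 1 - \tfrac12(\th_\spec^\G(v^-)+\th_\spec^\G(v^+))$. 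To make this rigorous on a graph with cycles one cannot argue purely ``locally forever'': I would either pass to a fixed point of the averaging operator (showing it has a unique solution consistent with the sink values, using that $T$-sinks are reachable / that non-reaching behaviour is accounted for by the value-$1$ sinks) or use a potential/ranking argument tracking a suitably discounted distance to $T$. This fixed-point/termination bookkeeping --- ensuring the locally-maintained invariant actually delivers reachability of $T$ rather than looping among non-target vertices --- is the main obstacle.

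\emph{Complexity.} Once the recurrence is established, computing the thresholds reduces to solving the system $x_v = \tfrac12(\text{min successor} + \text{max successor})$, $x_t = 0$ or $1$ at sinks. For binary graphs this is a linear system of the form used for random-turn / simple stochastic games: the threshold equals the value of the random-turn game obtained by replacing each bidding choice with a fair coin flip, and for binary (hence ``stopping'' after adding the sink structure) graphs this value is computable in PTIME, e.g.\ by linear programming or policy iteration; deciding $\th_\spec^\G(v) > 0.5$ is then immediate. For general out-degree the min/max makes the system piecewise-linear: I would guess the optimal successor pair at each vertex (an NP certificate), verify in PTIME by solving the resulting linear system and checking the min/max conditions, giving membership in NP; coNP follows by the symmetric guess for \PT using complementarity. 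I would cite \cite{LLPU96,LLPSU99} for the precise reduction to random-turn games and the PTIME algorithm in the binary case.
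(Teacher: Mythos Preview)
Your plan is correct and aligns with the paper's treatment. Note that the paper does not prove this theorem---it is quoted from \cite{LLPU96,LLPSU99}---and only supplies an informal explanation: the same ``bid half the gap, go to $v^-$'' argument you give for the upper bound, the observation that this maintains the invariant $B>\th_\spec^\G(v')$ at every successor, and an explicit acknowledgement that on graphs with cycles ``a little extra work on the bidding policy is needed'' with a pointer to \cite{LLPSU99}. Your identification of the cycle/termination bookkeeping as the main obstacle, and your proposed fixed-point or ranking argument, is exactly the missing piece the paper defers to the cited work. For complexity, your guess-and-verify via LP argument is the same one the paper uses later (in the proof of Thm.~\ref{thm:strong:reach}) and the random-turn connection for the binary PTIME bound is the classical route from \cite{LLPU96}.
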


Fix a bidding game $\tup{\G,\spec}$ with the reachability objective $\spec = \reach_\G(\set{t})$ for the single target $t\in \V$. 
If the game starts at $t$, since the objective is already fulfilled---irrespective of the budget, hence $\th_\spec^\G(t)=0$.
For every other vertex $v$, it is known that the threshold can be represented as:
\begin{align}\label{eq:bidding games:reach threshold}
	\th_\spec^\G(v) = \frac{1}{2}\left( \th_\spec^\G(v^+) + \th_\spec^\G(v^-)\right),
\end{align} 
where $v^+$ and $v^-$ represent the successors of $v$ with the largest and the smallest thresholds, respectively.
Then if the initial configuration is $\tup{v,B} = \tup{v,\th_\spec^\G+\epsilon}$, for any given $\epsilon>0$, \PO can bid $\bidpol(v) = \sfrac{1}{2}(\th_\spec^\G(v^+) - \th_\spec^\G(v^-))$ and go to $\actpol(v) = v^-$ upon winning.
If he wins and moves the token to $v^-$, then his new budget will be $B - \bidpol(v) = \th_\spec^\G(v^-) + \epsilon$.
If he loses and \PT moves the token to a successor $v'$, then we know that \PT must have bid higher than $\bidpol(v)$ which \PO will receive, so that \PO's new budget will be at least $B + \bidpol(v) = \th_\spec^\G(v^+)+\epsilon$, larger than $\th_\spec^\G(v') + \epsilon$ by definition of $v^+$.
This way, the strategy $\tup{\actpol,\bidpol}$ can ensure that for every vertex $v'$ where the token may reach from $v$ in one step, the invariant $B>\th_\spec^\G(v')$ is maintained, so that \PO can win from $v'$, which can be repeated and eventually \PO will reach $t$. This strategy works on DAGs. For graphs with cycles, a little extra work on the bidding policy is needed to get out of cycles and move towards the target \cite{LLPSU99}.

For infinite-horizon objectives, like parity, it is known that eventually one of the BSCCs will be reached, and inside every BSCC every vertex can be reached by both players infinitely often with every arbitrary initial budget.
This implies that for every parity objective, the threshold of every vertex inside every BSCC in a game graph is either $0$ or $1$, and fulfilling a given parity objective is equivalent to \emph{reaching} a BSCC whose every vertex has threshold $0$.
We state this formally.

\begin{theorem}[\cite{AHC19}]
\label{thm:bidding-parity}
	Consider a bidding game $\tup{\G,\spec}$ with a parity objective $\spec$.
	Let $S$ be a BSCC of $\G$. Every vertex in $S$ has threshold either $0$ or $1$, and it is $1$ iff the highest parity index in $S$ is odd.
	Moreover, for a vertex $v$ not in a BSCC, we have $\th_\spec^\G(v)=\th_{\reach_\G(T)}^\G(v)$, where $T$ is the union of BSCCs whose vertices have threshold $0$.
\end{theorem}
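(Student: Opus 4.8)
\emph{Proof plan.}
The plan is to reduce the parity bidding game to the reachability bidding game of Theorem~\ref{thm:RT-reach}, using two facts about \emph{strongly connected} bidding games from \cite{LLPU96,LLPSU99,AHC19} that I treat as black boxes: in a strongly connected game, (i)~for every nonempty $U\subseteq\V$ the reachability threshold for $\reach_\G(U)$ is $0$ at every vertex, and (ii)~for every nonempty $U\subseteq\V$ the threshold for the B\"uchi objective ``visit $U$ infinitely often'' is $0$ at every vertex. Fact~(ii) is the technical core taken from \cite{AHC19}: although a single visit to $U$ costs budget, across the infinitely many attack phases a player regains budget whenever it loses a bid, and a potential argument keeps the budget bounded away from $0$.

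First I would settle the BSCC case. If $S$ is a BSCC then it has no outgoing edges, so every play entering $S$ has $\rec(\rho)\subseteq S$. Let $p^*$ be the highest priority occurring in $S$, attained at some $v^*\in S$. If $p^*$ is even, then fact~(ii) applied to the sub-game induced by $S$ lets \PO force $v^*$ to be visited infinitely often from every configuration; since $\max\col(\rec(\rho))\le p^*$ and $p^*\in\col(\rec(\rho))$, this makes $\max\col(\rec(\rho))=p^*$ even, so \PO wins and $\th_\spec^\G(v)=0$ for all $v\in S$. Symmetrically, if $p^*$ is odd then \PT wins from every configuration in which \PO's budget is below $1$, so $\th_\spec^\G(v)=1$ for all $v\in S$.

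Next I would reduce the whole game to reachability. Let $T$ (resp.\ $T'$) be the union of the BSCCs with even (resp.\ odd) highest priority, which by the previous step are exactly the threshold-$0$ (resp.\ threshold-$1$) BSCCs. Contract each BSCC of $\G$ to a fresh sink, obtaining a graph $\G'$ whose non-sink vertices are precisely the vertices of $\G$ lying in no BSCC, and keep the names $T,T'$ for the corresponding sets of sinks of $\G'$. Applying Theorem~\ref{thm:RT-reach} to $\tup{\G',\reach_{\G'}(T)}$ and to $\tup{\G',\reach_{\G'}(T')}$ shows that the threshold functions $r\coloneqq\th_{\reach_{\G'}(T)}^{\G'}$ and $q\coloneqq\th_{\reach_{\G'}(T')}^{\G'}$ solve the same averaging recursion with complementary boundary values, hence $r(v)+q(v)=1$ everywhere. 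I then claim $\th_\spec^\G(v)=r(v)=\th_{\reach_\G(T)}^\G(v)$ for every $v$ in no BSCC. For $\th_\spec^\G(v)\le r(v)$: above $r(v)$, \PO plays the reachability strategy of Theorem~\ref{thm:RT-reach} towards $T$; using that such strategies escape cycles (as noted after Theorem~\ref{thm:RT-reach}), it forces the play into a threshold-$0$ BSCC with positive remaining budget, where \PO switches to the BSCC strategy and wins. For $\th_\spec^\G(v)\ge r(v)$: below $r(v)$, \PT's budget exceeds $q(v)$, so \PT forces the play into $T'$ --- a BSCC with odd highest priority --- and switches to its BSCC strategy; here one must check that a play cannot be kept forever among non-BSCC vertices against such a \PT, which holds because inside any non-bottom SCC the player heading for an exit wins the reachability game towards the exits from any positive budget by fact~(i).

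The main obstacle I anticipate is fact~(ii) --- the \cite{AHC19} argument that in a strongly connected graph a player can enforce infinitely many visits to a target set without ever exhausting its budget. Everything else is an orchestration of reachability strategies together with the structural observation that, under optimal play, the token always settles in a BSCC, which is precisely what makes the reduction to ``reach a good BSCC'' tight.
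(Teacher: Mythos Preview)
The paper does not prove this theorem: it is quoted as a black-box result from \cite{AHC19}, preceded only by a one-paragraph intuition (``eventually one of the BSCCs will be reached, and inside every BSCC every vertex can be reached by both players infinitely often with every arbitrary initial budget''). Your proof plan is correct and is precisely an elaboration of that intuition: the BSCC analysis via the B\"uchi-threshold-zero fact, followed by the reduction to reachability towards the good BSCCs, is exactly the argument the paper alludes to and that \cite{AHC19} carries out.

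One small point worth tightening: you contract BSCCs to sinks in $\G'$ and work with $r=\th_{\reach_{\G'}(T)}^{\G'}$, but the theorem's conclusion is phrased in terms of $\th_{\reach_\G(T)}^\G$. You should note explicitly that these coincide on non-BSCC vertices, which follows because a BSCC has no outgoing edges (so reaching any vertex of a BSCC in $T$ is equivalent to reaching the corresponding sink) and the averaging recursion of Theorem~\ref{thm:RT-reach} therefore sees identical boundary data. Your handling of the ``play cannot stay forever outside all BSCCs'' issue via fact~(i) is the right idea; in fact the reachability strategy of Theorem~\ref{thm:RT-reach} already guarantees progress towards the target sinks, so the appeal to fact~(i) is not strictly needed for the upper-bound direction, only for the lower bound where \PT must force entry into $T'$.
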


\section{Strong Decentralized Synthesis}
\label{sec:strong}
We study the {\em strong decentralized synthesis} problem, where 
the goal is to synthesize two compatible {\em robust tenders}, i.e., tenders that guarantee the fulfillment of their objectives when composed with {\em any} compatible tender.

\begin{definition}[Robust tenders]\label{def:robust tenders}
	Let $\G$ be a graph and $\spec_i$ be an objective in $\G$.
	A tender $\t_i$ is \emph{robust} for $\spec_i$ if for every other compatible tender $\t_{-i}\in\Tenders^\G$, we have $\comp{\t_i}{\t_{-i}}\models \spec_i$.
\end{definition}

\begin{myproblem}[$\strongprob$]
Define $\strongprob$ as the problem whose input is a tuple $\tup{\G,\spec_1,\spec_2}$, where $\G$ is a graph and $\spec_1$ and $\spec_2$ are overlapping $\omega$-regular objectives in $\G$, and the goal is to decide whether there exists a pair of tenders $\t_1,\t_2  \in \Tenders^\G$ such that:
		\begin{enumerate}[(I)]
			\item $\t_1$ and $\t_2$ are compatible,
			\item $\t_1$ is robust for $\spec_1$, and 
			\item $\t_2$ is robust for $\spec_2$.
		\end{enumerate}
\end{myproblem}

Since each robust tender $\t_i$ guarantees that $\spec_i$ is satisfied when composed with {\em any} tender, the composition of two robust tenders satisfies both objectives:
\begin{proposition}[Sound composition of robust tenders]\label{prop:sound of composition of robust tenders}
Let $\t_1$ and $\t_2$ be two compatible robust tenders for $\tup{\G,\spec_1,\spec_2}$. Then $\comp{\t_1}{\t_2}\models \spec_1\cap\spec_2$.
\end{proposition}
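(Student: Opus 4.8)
The plan is to derive the claim almost directly from the definitions of \emph{robustness} (Def.~\ref{def:robust tenders}) and \emph{composition} (Def.~\ref{def:composition}); the only non-cosmetic point is the symmetry of the composition operator in its two arguments.

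First I would record that, since $\t_1$ and $\t_2$ are compatible, by definition $\B_1+\B_2<1$, so there is an initial split $B_1^0>\B_1$ with $B_2^0=1-B_1^0>\B_2$; fix one such initial configuration $\tup{\vinit,B_1^0}$. This guarantees that $\out(\comp{\t_1}{\t_2})$ is well-defined and, symmetrically, that $\out(\comp{\t_2}{\t_1})$ started from $\tup{\vinit,1-B_1^0}$ is well-defined. Second, I would observe that the composition operator is \emph{symmetric} up to the bookkeeping of which tender's budget is tracked: running $\comp{\t_1}{\t_2}$ from $\tup{\vinit,B_1^0}$ and running $\comp{\t_2}{\t_1}$ from $\tup{\vinit,1-B_1^0}$ produce, at every step, the same vertex and the same pair of bids $b_1,b_2$; only the recorded coordinate changes from $B_1^k$ to $B_2^k=1-B_1^k$. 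Hence $\proj{\V}{\out(\comp{\t_1}{\t_2})}=\proj{\V}{\out(\comp{\t_2}{\t_1})}$; call this common path $\rho$. This little claim is proved by a routine induction on the length of the history, comparing the two runs step by step and using the defining case split of Def.~\ref{def:composition} (the higher bidder is the same in both runs because the bids are the same).

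Third, I would apply robustness twice. Since $\t_1$ is robust for $\spec_1$ and $\t_2\in\Tenders^\G$ is a compatible tender, Def.~\ref{def:robust tenders} gives $\comp{\t_1}{\t_2}\models\spec_1$, i.e.\ $\rho\in\spec_1$. Symmetrically, since $\t_2$ is robust for $\spec_2$ and $\t_1$ is compatible with it, $\comp{\t_2}{\t_1}\models\spec_2$, i.e.\ $\rho\in\spec_2$. Therefore $\rho\in\spec_1\cap\spec_2$, which is exactly the statement $\comp{\t_1}{\t_2}\models\spec_1\cap\spec_2$.

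The main (indeed the only) obstacle is the symmetry observation: one must be careful that Def.~\ref{def:composition} is genuinely insensitive to the labelling of the two tenders, so that ``robust for $\spec_1$'' and ``robust for $\spec_2$'' may legitimately be invoked on the same run $\rho$ rather than on two a priori different runs. Everything else is unwinding notation.
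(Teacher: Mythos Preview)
Your approach is essentially the paper's: the paper offers no formal proof and treats the proposition as immediate, writing only ``Since each robust tender $\t_i$ guarantees that $\spec_i$ is satisfied when composed with \emph{any} tender, the composition of two robust tenders satisfies both objectives.'' You have unpacked exactly this into explicit steps and correctly isolated the symmetry of composition as the one thing to check.

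One small wrinkle in your symmetry argument: Def.~\ref{def:composition} resolves ties in favour of the \emph{first} argument (the clause $b_1\geq b_2$), so $\comp{\t_1}{\t_2}$ from $\tup{\vinit,B_1^0}$ and $\comp{\t_2}{\t_1}$ from $\tup{\vinit,1-B_1^0}$ are not literally the same run when a tie occurs; your sentence ``the higher bidder is the same in both runs because the bids are the same'' fails precisely at ties. The paper's one-line justification glosses over the same point. The standard fix (implicit throughout the bidding-games literature the paper relies on) is that a robust tender, having budget strictly above threshold, wins irrespective of how ties are broken; equivalently, one reads Def.~\ref{def:robust tenders} as quantifying over both positions in the composition. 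With that reading your argument goes through verbatim.
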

We reduce the strong decentralized synthesis problem to the solution of two independent bidding games, both played on the graph $\G$, one with \PO's objective $\spec_1$ and the other one with \PO's objective $\spec_2$. When the sum of thresholds in $\vinit$ is less than $1$, we set the two tenders to be winning \PO strategies in the two games with the threshold budgets of the tenders being set as the respective thresholds in $v^0$. 
It follows from the construction that both tenders are robust, and hence their composition will fulfill both objectives (Prop.~\ref{prop:sound of composition of robust tenders}).


\begin{restatable}[Strong decentralized synthesis]{theorem}{StrongReach}\label{thm:strong:reach}
	Let $\G=\tup{\V,\vinit,\E}$ be a graph and $\spec_1$ and $\spec_2$ be a pair of overlapping $\omega$-regular objectives.
	A pair of robust tenders exists iff $\th_{\spec_1}^{\G}(\vinit) + \th_{\spec_2}^{\G}(\vinit) < 1$. Moreover, $\strongprob$ is in NP $\cap$ coNP in general and is in PTIME for binary graphs. 
\end{restatable}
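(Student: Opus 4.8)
The plan is to reduce $\strongprob$ to solving the two zero‑sum bidding games $\G_i\coloneqq\tup{\G,\spec_i}$, $i\in\set{1,2}$, both played on $\G$, and then to a comparison of thresholds at $\vinit$ --- recall that $\th_{\spec_i}^{\G}(\vinit)$ is exactly \PO's threshold in $\G_i$. I will use throughout the following fact about bidding games with $\omega$‑regular objectives (Theorems~\ref{thm:RT-reach} and~\ref{thm:bidding-parity} together with the cited literature): a player whose budget is \emph{strictly} above its own threshold has a winning strategy regardless of how ties are broken, and dually, with budget strictly below the threshold the opponent wins; since every budget inequality arising below is strict, the boundary behaviour at the threshold never matters. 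For the forward direction, suppose $\th_{\spec_1}^{\G}(\vinit)+\th_{\spec_2}^{\G}(\vinit)<1$. For each $i$ pick a \PO strategy $\tup{\actpol_i,\bidpol_i}$ that wins in $\G_i$ from every configuration $\tup{\vinit,B}$ with $B>\th_{\spec_i}^{\G}(\vinit)$, and set $\t_i\coloneqq\tup{\actpol_i,\bidpol_i,\th_{\spec_i}^{\G}(\vinit)}$. Then $\B_1+\B_2=\th_{\spec_1}^{\G}(\vinit)+\th_{\spec_2}^{\G}(\vinit)<1$, so $\t_1,\t_2$ are compatible; and for any compatible $\t_{-i}$, in every valid initial configuration the budget of $\t_i$ is $B_i^0>\B_i=\th_{\spec_i}^{\G}(\vinit)$, while $\tup{\actpol_{-i},\bidpol_{-i}}$ is just a \PT strategy in $\G_i$, so $\comp{\t_i}{\t_{-i}}\models\spec_i$. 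Hence each $\t_i$ is robust (Def.~\ref{def:robust tenders}) and $\tup{\t_1,\t_2}$ solves $\strongprob$.

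For the converse, let $\t_1,\t_2\in\Tenders^\G$ be a solution; then $\B_1+\B_2<1$, and I claim $\th_{\spec_i}^{\G}(\vinit)\le\B_i$ for both $i$, which together with compatibility yields $\th_{\spec_1}^{\G}(\vinit)+\th_{\spec_2}^{\G}(\vinit)\le\B_1+\B_2<1$. If instead $\th_{\spec_i}^{\G}(\vinit)>\B_i$ for some $i$, pick a rational $B\in(\B_i,\th_{\spec_i}^{\G}(\vinit))$. Since $B$ is strictly below \PO's threshold in $\G_i$, \PT has a strategy $\tup{\actpol',\bidpol'}$ winning against every \PO strategy from $\tup{\vinit,B}$. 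Package it as $\t_{-i}'\coloneqq\tup{\actpol',\bidpol',0}$, which is compatible with $\t_i$ since $\B_i+0<1$. Running $\comp{\t_i}{\t_{-i}'}$ from the valid initial configuration $\tup{\vinit,B}$ (here $B>\B_i$ and $1-B>0$) produces a play in which $\t_{-i}'$ follows a \PT‑winning strategy against $\t_i$'s action policy, so the play violates $\spec_i$ --- contradicting robustness of $\t_i$.

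For the complexity claim, the task is to decide $\th_{\spec_1}^{\G}(\vinit)+\th_{\spec_2}^{\G}(\vinit)<1$. By taking a product with a deterministic parity automaton --- polynomial, and sending binary graphs to binary graphs --- we may assume each $\spec_i$ is a parity objective. By Theorem~\ref{thm:bidding-parity}, compute in PTIME the BSCC decomposition of $\G$ and the union $T_i$ of the BSCCs whose highest parity index is even; then $\th_{\spec_i}^{\G}(\vinit)$ is $0$ or $1$ (read off directly) if $\vinit$ lies in a BSCC, and otherwise equals the reachability threshold $\th_{\reach_\G(T_i)}^{\G}(\vinit)$, which we evaluate after collapsing each BSCC to a target or non‑target sink. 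By the recurrence of Theorem~\ref{thm:RT-reach}, such a reachability threshold is a rational of polynomially bounded bit‑length that is pinned down by designating, at each non‑sink vertex, a ``minimal'' and a ``maximal'' successor: fixing such a designation gives a linear system $x_v=\tfrac12(x_{v^+}+x_{v^-})$ (with $x=0$, resp.\ $1$, at target, resp.\ non‑target, sinks) whose unique solution is PTIME‑computable, and consistency of the designation (the designated successors really attain the minimum and maximum) is PTIME‑checkable. A consistent designation for $\G_1$ and $\G_2$ therefore serves simultaneously as an NP‑ and a coNP‑witness of the exact values $\th_{\spec_1}^{\G}(\vinit),\th_{\spec_2}^{\G}(\vinit)$, hence of the truth of $\th_{\spec_1}^{\G}(\vinit)+\th_{\spec_2}^{\G}(\vinit)<1$; so $\strongprob\in\text{NP}\cap\text{coNP}$. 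When $\G$ is binary every vertex has at most two successors, so the designation is forced, the linear systems are explicit, and the whole procedure runs in PTIME. I expect this last step to be the main obstacle: Theorem~\ref{thm:RT-reach} is phrased only for the test ``$\th>0.5$'', and the work is to promote it to a comparison of a \emph{sum} of thresholds with $1$ --- which the argument above does by certifying the exact threshold values through consistent successor designations rather than through any single threshold query.
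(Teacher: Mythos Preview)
Your argument is correct and follows the same two-step reduction as the paper: build robust tenders from \PO winning strategies in the bidding games $\tup{\G,\spec_i}$ for the forward direction, and use a \PT winning strategy packaged as a compatible opposing tender for the converse. Your converse is in fact slightly cleaner than the paper's: by choosing $B\in(\B_i,\th_{\spec_i}^{\G}(\vinit))$ strictly between the tender's threshold and the game threshold, you sidestep the boundary case and the compatibility issue that the paper's choice $\t'_{-i}=\tup{\cdot,\cdot,1-\B_i}$ glosses over (there $\B_i+(1-\B_i)=1$, and the composition is run from $\zug{\vinit,\B_i}$ rather than from a budget strictly above $\B_i$). For complexity, your ``consistent designation of $v^-,v^+$ plus linear system'' is exactly the paper's ``guess memoryless action policies, then solve a linear program''; you additionally spell out the parity-to-reachability reduction via Theorem~\ref{thm:bidding-parity} and note that the product with a deterministic parity automaton preserves out-degree, which the paper leaves implicit.
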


\begin{proof}
	First, assume that $\th_{\spec_1}^{\G}(\vinit) + \th_{\spec_2}^{\G}(\vinit) < 1$. For $i \in \set{1,2}$, let $\zug{\actpol_i, \bidpol_i}$ denote a winning \PO strategy in the bidding game $\zug{\G, \spec_i}$ from every configuration $\zug{\vinit, B}$ with $B> \th_{\spec_i}^{\G}(\vinit)$. 
	We argue that the render $\t_1 = \zug{\actpol_1, \bidpol_1, \th_{\spec_1}^{\G}(\vinit)}$ is robust for $\spec_1$, and the proof for $\t_2$ is dual. Indeed, for any compatible tender $\t'_{2} = \zug{\actpol'_{2}, \bidpol'_{2}, \B'_{2}}$, the pair $\zug{\actpol'_{2}, \bidpol'_{2}}$ corresponds to a \PT strategy in the bidding game $\zug{\G, \spec_1}$. The resulting play coincides with $out(\comp{\t_1}{\t'_2})(\zug{\vinit, B})$ and satisfies $\spec_1$ since the strategy $\zug{\actpol_1, \bidpol_1}$ is winning. 
	
	Second, suppose that $\th_{\spec_1}^{\G}(\vinit) + \th_{\spec_2}^{\G}(\vinit) \geq 1$. For any allocation $\B_1 + \B_2 < 1$, there is an $i \in \set{1,2}$ such that $\B_i \leq \th_{\spec_i}^{\G}(\vinit)$. Assume WLog that $\B_1 \leq \th_{\spec_1}^{\G}(\vinit)$.
	Consider a winning \PT strategy $\zug{\actpol_2, \bidpol_2}$ in the bidding game $\zug{\G, \spec_1}$ from $\zug{\vinit, \B_1}$. The tender $\t'_{2} = \zug{\actpol_2, \bidpol_2, 1-\B_1}$ is compatible and $out(\comp{\t_1}{\t'_2}(\zug{\vinit, \B_1}))$ violates $\spec_1$. 
	
	Finally, in order to obtain the complexity bounds, we guess memoryless action policies in both games, which are known to exist~\cite{LLPSU99}, and verify that they are optimal. Based on the guess, we devise a linear program to compute the thresholds. Finally, we verify that the sum of thresholds in $\vinit$ is less than $1$. For binary graphs, there is no need to guess the action policy in order to find thresholds (Thm.~\ref{thm:RT-reach}).
	\qed
\end{proof}

We identify a setting where strong decentralized synthesis is always possible. The following theorem follows from the result that threshold budgets in strongly-connected B\"uchi games containing at least one accepting vertex are $0$.

\begin{theorem}[Strong decentralized synthesis on SCCs]
Consider a strongly-connected graph $\G$ and a pair of non-empty B\"uchi objectives in $\G$. Then, a pair of  robust tenders exists in $\G$.
\end{theorem}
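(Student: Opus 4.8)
The plan is to invoke the threshold characterisation of Theorem~\ref{thm:strong:reach}: a pair of robust tenders for $\tup{\G,\spec_1,\spec_2}$ exists iff $\th_{\spec_1}^{\G}(\vinit)+\th_{\spec_2}^{\G}(\vinit)<1$. I would first note that its hypotheses are met: since $\G$ is finite and strongly connected, some infinite path in $\G$ visits every vertex infinitely often, and such a path belongs to both $\spec_1$ and $\spec_2$ (each having a non-empty accepting set), so $\spec_1,\spec_2$ are overlapping. It then suffices to show $\th_{\spec_i}^{\G}(\vinit)=0$ for each $i\in\set{1,2}$.

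To prove $\th_{\spec_i}^{\G}(\vinit)=0$, I would use Theorem~\ref{thm:bidding-parity}. Encode the B\"uchi objective $\spec_i = \buchi_\G(S_i)$, with $S_i\neq\emptyset$, as a parity objective over colours $[1;2]$, giving colour $2$ to the vertices of $S_i$ and colour $1$ to the others. As $\G$ is strongly connected, $\G$ is its own (unique) BSCC, and since $S_i\neq\emptyset$ the largest colour occurring in it is $2$, which is even; hence by Theorem~\ref{thm:bidding-parity} every vertex of this BSCC---that is, every vertex of $\G$, in particular $\vinit$---has threshold $0$.

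Combining the two steps, $\th_{\spec_1}^{\G}(\vinit)+\th_{\spec_2}^{\G}(\vinit)=0<1$, so Theorem~\ref{thm:strong:reach} produces the required pair of robust tenders (with threshold budget $0$ each, which are compatible since $0+0<1$).

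I do not anticipate a real obstacle: the statement is a short combination of Theorems~\ref{thm:strong:reach} and~\ref{thm:bidding-parity}. The only steps needing routine care are observing that a strongly-connected graph coincides with its unique BSCC, so that Theorem~\ref{thm:bidding-parity} applies with an even maximal colour, and checking the overlapping condition required to invoke Theorem~\ref{thm:strong:reach}.
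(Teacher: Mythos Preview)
Your proposal is correct and follows essentially the same approach as the paper, which simply remarks that the theorem ``follows from the result that threshold budgets in strongly-connected B\"uchi games containing at least one accepting vertex are $0$.'' You have spelled out this one-line justification carefully, including the routine checks (overlap of the objectives, applicability of Theorem~\ref{thm:bidding-parity} via the B\"uchi-as-parity encoding) that the paper leaves implicit.
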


We demonstrate the effectiveness of strong synthesis using path planning problems with two reachability objectives.
	Consider a fixed grid but four different instances of the problem, as shown in Fig.~\ref{fig:strong synthesis on a grid}.
	For the first three cases, we successfully obtain pairs of robust tenders whose compositions fulfill both objectives.
	Moreover, since the blue target remained the same in all cases, we needed to redesign only the red tender, saving us a significant amount of computation.

\begin{figure}
	\centering
	\includegraphics[width=0.24\textwidth]{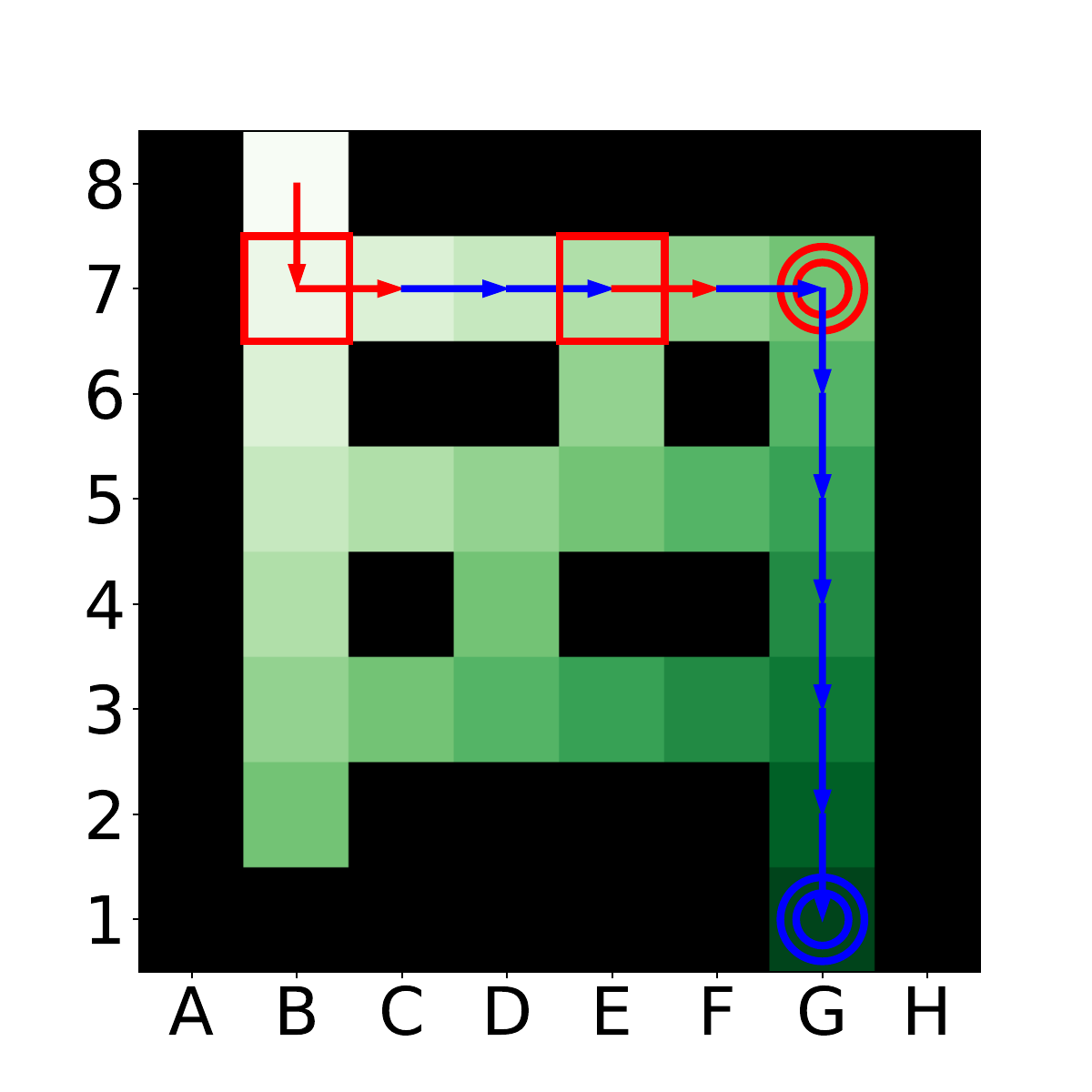}
	\includegraphics[width=0.24\textwidth]{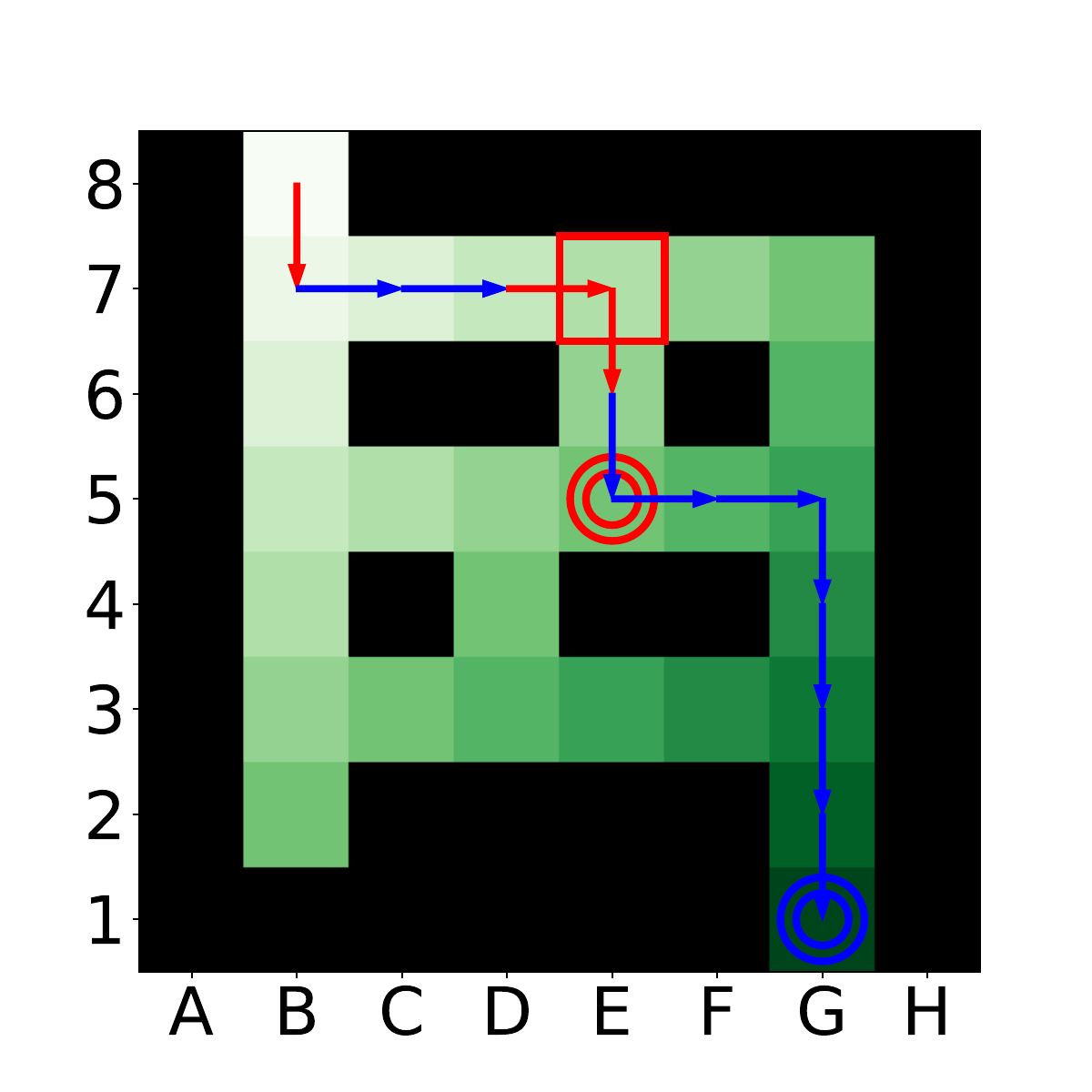}
	\includegraphics[width=0.24\textwidth]{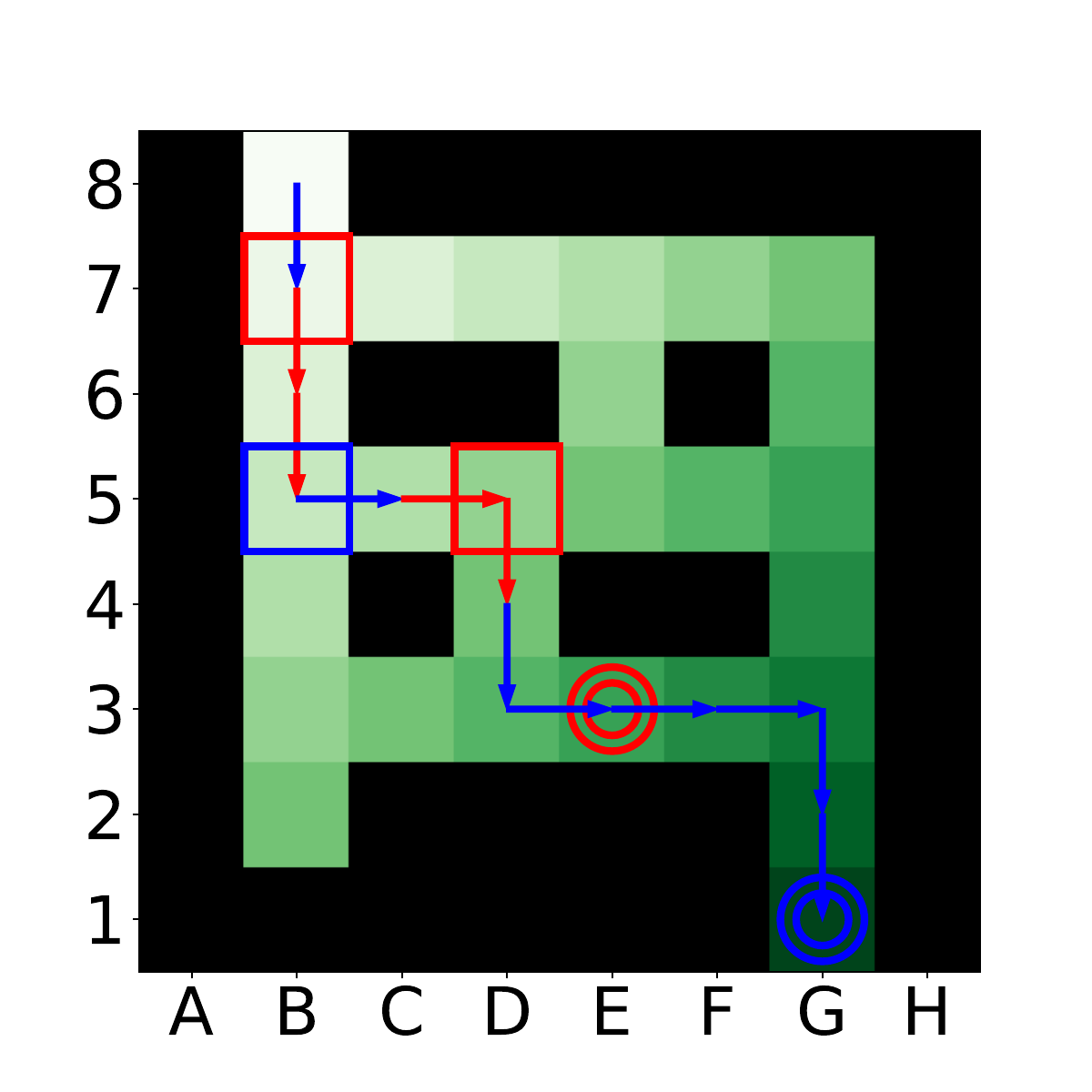}
	\includegraphics[width=0.24\textwidth]{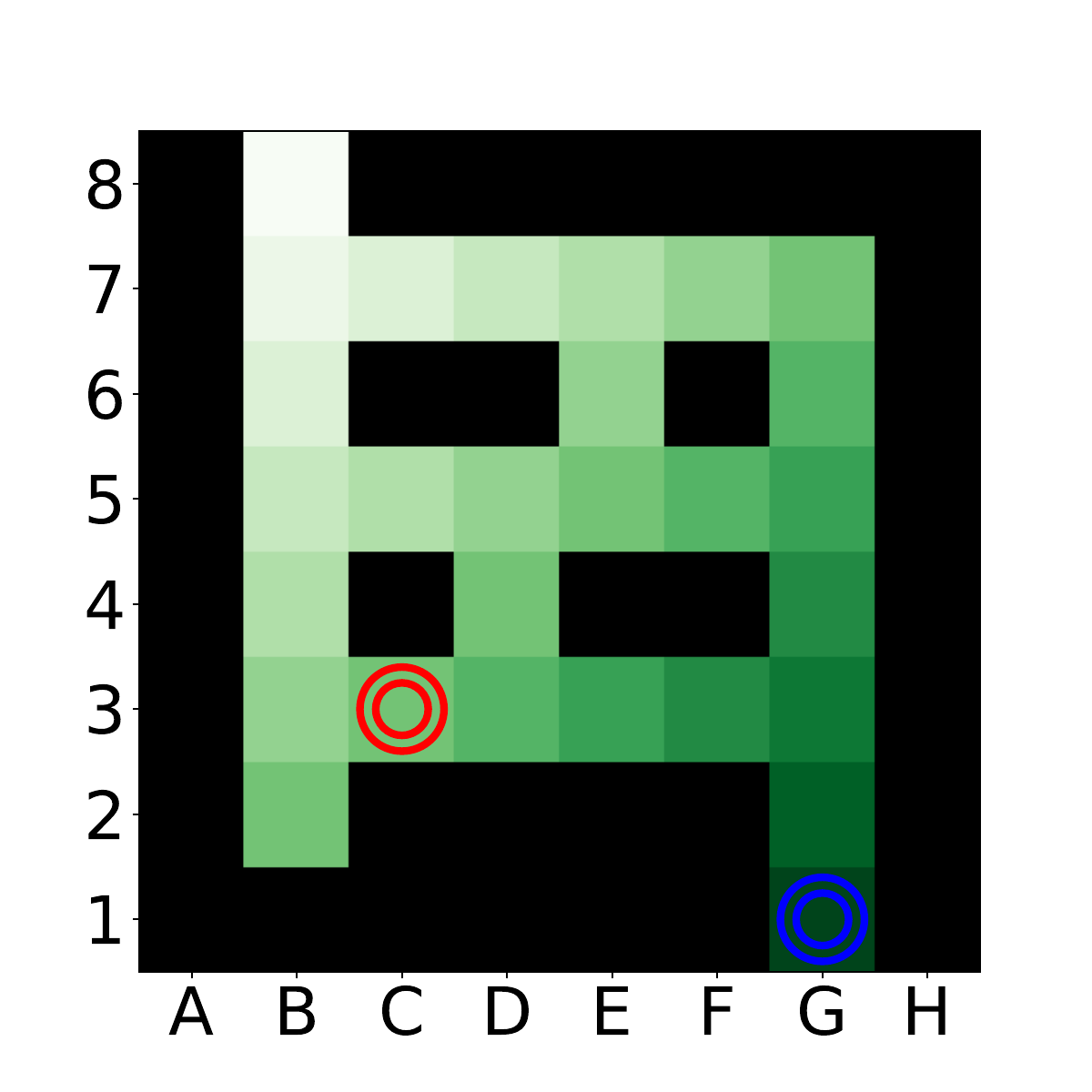}
	\caption{Robust tenders for path planning with two reachability objectives on a one-way grid, where the black cells are obstacles and \emph{the only permissible moves are from lighter to darker green cells}---and not the other way round.
	The cell B8 is the initial location.
	The cells with double circles of colors red (respectively, G7, E5, E3, C3) and blue (G1) are the targets to reach.
	The path shows the output of the composition of the two tenders, where the red and blue segments are actions which were chosen by the red and blue tenders, respectively.
	The cells with red and blue squares are locations where the respective tenders win the bidding; in the rest of the cells on the paths, the bidding ended in ties which were resolved randomly. 
	Strong synthesis was successful in the first three instances and failed in the last one.
	The pairs of thresholds of red and blue targets are, respectively (left to right):
	$(0.75,0.125)$,
	$(0.625,0.125)$,
	$(0.75, 0.125)$,
	$(0.875, 0.125)$.
	}
	\label{fig:strong synthesis on a grid}
\end{figure}


\section{Assume-Admissible Decentralized Synthesis}\label{sec:assume-admissible}

In assume-admissible decentralized synthesis, each tender assumes that the other tender is \emph{rational} and pursues its own objective.
We formalize rationality by adapting the well-known concepts of {\em dominance} and {\em admissibility} from game theory~\cite{BFH08,Berwanger07}. Intuitively, $\t_i$ dominates $\t'_i$ if $\t_i$ is always at least as good as $\t_i$ and sometimes strictly better than $\t_i$; therefore, there is no reason to use $\t_i'$.
 An admissible tender is one that is not dominated by any other tender.

\begin{definition}[Dominance, admissibility]\label{def:dominance}
	Let $\G$ be a graph and $\spec$ be an objective.
	We provide definitions for the first tender and the definitions for the second tender are dual. 
	Let $\B_1 < 1$. For two tenders $\t_1$ and $\t_1'$ that have the same budget allocation,  
	 $\t_1$ {\em dominates} $\t'_1$ when
	\begin{enumerate}[(a)]
	\item $\t_1$ performs as well as $\t'_1$ when composed with any compatible $\t_2$; formally, for every compatible tender  \(\t_2\),   
	 \(\comp{\t_1'}{\t_2} \models \spec\) implies \(\comp{\t_1}{\t_2} \models \spec\), and
	\item there is a compatible tender \(\t_2\) for which $\t_1$ performs better than $\t'_1$; formally, there exists a compatible $\t_2$ with \(\comp{\t_1}{\t_2} \models \spec\), and \(\comp{\t_1'}{\t_2} \not\models \spec\). 
	\end{enumerate}
	A tender $\t_1$ is called {\em $\spec$-admissible} in $\G$ iff it is not dominated by any other tender. We denote the set of $\spec$-admissible tenders in $\G$ by \(\Adm{\spec}{\G}\). 
\end{definition}

Next, we define admissible-winning tenders, which are tenders that fulfill their objectives when composed with {\em any} admissible tender. 

\begin{definition}[Admissible-winning tenders]\label{def:admissibletenders}
	Let $\G$ be a graph and $\spec_1,\spec_2$ be a pair of overlapping objectives in $\G$.
	A tender $\t_i$ is called \emph{$\spec_{-i}$-admissible-winning} for $\spec_i$ if and only if $\t_i\in\Adm{\spec_i}{\G}$, and for every other tender $\t_{-i}\in\Adm{\spec_{-i}}{\G}$ compatible with $\t_i$, we have $\comp{\t_i}{\t_{-i}}\models \spec_i$.
\end{definition}

When the objectives are clear from the context, we will omit them and will simply write a tender is ``admissible tender,'' ``admissible-winning tender,'' etc.

\begin{myproblem}[$\aaprob$]\label{prob:assume-admissible}
	Define $\aaprob$ as the problem whose input is a tuple $\tup{\G,\spec_1,\spec_2}$, 
	where $\G$ is a graph and
	$\spec_1$ and $\spec_2$ are overlapping $\omega$-regular objectives in $\G$,
	and the goal is to decide whether there exists a pair of tenders $\t_1 \in \Adm^\G(\spec_1)$ and $\t_2  \in \Adm^\G(\spec_2)$ such that:
		\begin{enumerate}[(I)]
			\item $\t_1$ and $\t_2$ are compatible,
			\item $\t_1$ is $\spec_2$-admissible-winning for $\spec_1$, and 
			\item $\t_2$ is $\spec_1$-admissible-winning for $\spec_2$.
		\end{enumerate}
\end{myproblem}

The following proposition follows from the requirement that \(\t_1\) and \(\t_2\) are admissible.
\begin{proposition}[Sound composition of admissible-winning tenders]\label{prop:aa-sound}
Let $\t_1$ and $\t_2$ be tenders that fulfill the requirements stated in Prob.~\ref{prob:assume-admissible}. Then, $\comp{\t_1}{\t_2}\models \spec_1\cap\spec_2$.
\end{proposition}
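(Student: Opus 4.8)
The plan is to prove Proposition~\ref{prop:aa-sound} by unfolding the definition of admissible-winning tender (Def.~\ref{def:admissibletenders}) and instantiating its universal quantifier with the tender sitting on the other side. First I would collect the data furnished by Prob.~\ref{prob:assume-admissible}: that $\t_1\in\Adm{\spec_1}{\G}$ and $\t_2\in\Adm{\spec_2}{\G}$ (each tender is admissible for its own objective); that $\t_1$ and $\t_2$ are compatible; that $\t_1$ is $\spec_2$-admissible-winning for $\spec_1$; and that $\t_2$ is $\spec_1$-admissible-winning for $\spec_2$.

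Next I would derive $\comp{\t_1}{\t_2}\models\spec_1$. By Def.~\ref{def:admissibletenders}, $\t_1$ being $\spec_2$-admissible-winning for $\spec_1$ means that for \emph{every} tender $\t\in\Adm{\spec_2}{\G}$ compatible with $\t_1$ we have $\comp{\t_1}{\t}\models\spec_1$. Instantiating $\t:=\t_2$ is legitimate, since $\t_2\in\Adm{\spec_2}{\G}$ and $\t_2$ is compatible with $\t_1$; this yields $\comp{\t_1}{\t_2}\models\spec_1$, i.e.\ $\proj{\V}{\out(\comp{\t_1}{\t_2})}\in\spec_1$. Symmetrically, since $\t_2$ is $\spec_1$-admissible-winning for $\spec_2$, instantiating its guarantee with the opponent $\t_1\in\Adm{\spec_1}{\G}$ (compatible with $\t_2$) gives $\comp{\t_2}{\t_1}\models\spec_2$. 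Because composing the two tenders produces one and the same play over $\G$ irrespective of which of the two is designated ``first'' — the bids, the winner, and the chosen successor at every step are determined by the pair of bidding/action policies together with the current vertex, and tracking $B_1$ is equivalent to tracking $B_2=1-B_1$ — we get $\proj{\V}{\out(\comp{\t_1}{\t_2})}=\proj{\V}{\out(\comp{\t_2}{\t_1})}\in\spec_2$. Combining the two inclusions, $\proj{\V}{\out(\comp{\t_1}{\t_2})}\in\spec_1\cap\spec_2$, that is, $\comp{\t_1}{\t_2}\models\spec_1\cap\spec_2$.

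The argument is essentially definition chasing, so I do not expect a substantial obstacle. The only point that deserves a line of care is the symmetry invoked above: one must check that $\comp{\t_1}{\t_2}$ and $\comp{\t_2}{\t_1}$ induce the same path over $\G$ — in particular, that the tie-breaking convention fixed in Def.~\ref{def:composition} does not cause the two orderings to diverge on plays where a tie occurs — after which the conclusion is immediate. (Alternatively, one may simply read $\comp{\t_i}{\t_{-i}}$ throughout the admissibility definitions as denoting the single play generated by the two tenders, which dispenses with even this remark.)
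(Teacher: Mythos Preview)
Your proposal is correct and matches the paper's own justification, which is a single sentence: the proposition ``follows from the requirement that $\t_1$ and $\t_2$ are admissible.'' You have simply unpacked that sentence into the obvious definition-chase, and your care about the tie-breaking asymmetry between $\comp{\t_1}{\t_2}$ and $\comp{\t_2}{\t_1}$ is a legitimate wrinkle the paper leaves implicit.
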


\begin{remark}
Note that the synthesis procedure for each $\spec_{-i}$-admissible-winning tender $\t_i$ for $\spec_i$ requires the knowledge of $\Adm{\spec_{-i}}{\G}$. 
Assume-admissible decentralized synthesis is modular in the following sense. 
First, the specific implementation of the tender $\t_{-i}$ with which each $\t_i$ is composed is not known during synthesis. All that is known is the objective $\spec_{-i}$ for which $\t_{-i}$ is synthesized. Second, each $\t_i$ can remain unchanged even when $\spec_{-i}$ changes to $\spec_{-i}'$, as long as $\Adm{\spec_{-i}'}{\G} \subseteq \Adm{\spec_{-i}}{\G}$.
\end{remark}

\subsection{Reachability objectives}

Throughout this section we focus on overlapping reachability objectives $\spec_1 = \reach_\G(T_1)$ and $\spec_2 = \reach_\G(T_2)$ with $T_1,T_2\subseteq \V$ being sets of sink target vertices. 
This is without loss of generality, as every graph with non-sink target vertices can be converted into a graph with sink target vertices by adding memory. 
We briefly describe the intuition in the following. 
A pair of reachability objectives with non-sink targets can be turned into a pair of reachability objectives $\spec_1',\spec_2'$ with sink targets $T_1',T_2'$ on a larger graph $\G'$.
The graph $\G'$ contains three copies of $\G$, call them $\G_\bot$, $\G_1$, and $\G_2$, which memorize which objective has been fulfilled so far, if any.
Initially the token is placed on $\vinit$ of $\G_\bot$, and as soon as either $T_1$ or $T_2 $ is visited, the token is moved to the respective vertex in $G_1$ or $G_2$, respectively.
In each $\G_i$, every sink vertex belongs to $T_i'$.
Furthermore, if there exists a $t\in T_i$, then the copy of $t$ in $G_{-i}'$ belongs to $T_i'$.
It can be shown that a path in $\G$ fulfills $\spec_1\cap\spec_2$ iff the corresponding path in $\G'$ fulfills $\spec_1'\cap\spec_2'$.

We reduce the decentralized assume-admissible synthesis problem to solving a pair of zero-sum bidding games on a sub-graph of $\G$. 
 Intuitively, an edge $e = \zug{u,v}$ is {\em dominated} for the $i$-th tender, for $i \in \set{1,2}$, if it is possible to achieve the objective $\spec_i$ from $u$ but not from $v$. Clearly, a tender that chooses $e$ is dominated and is thus not admissible (see the proof of the lemma in the full version \cite{AMS23}).  Recall that $\thresh{v}{\spec_i}{\G}$ denotes the threshold in the zero-sum bidding game played on $\G$ with the \PO objective $\spec_i$, and that $\th_{\spec_i}^\G(v)=1$ means there is no path from $v$ to $T_i$.

\begin{restatable}[A necessary condition for admissibility]{lemma}{necessarycondn}\label{lem:necessary conditions for admissibility}
	For every vertex $u$ having at least two successors $v,w$ with $\th_{\spec_i}^\G(v)< 1$ and $\th_{\spec_i}^\G(w)=1$, if a \PLi tender $\tup{\actpol_i,\bidpol_i, \B_i} $ is in $ \Adm{\spec_i}{\G}$, then $\actpol_i(u)\neq w$, for both $i\in\set{1,2}$.
\end{restatable}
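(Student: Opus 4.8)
The plan is to show the contrapositive: if a tender $\t_i = \tup{\actpol_i, \bidpol_i, \B_i}$ with $\B_i < 1$ plays $\actpol_i(u) = w$ for a vertex $u$ with successors $v, w$ satisfying $\th_{\spec_i}^\G(v) < 1$ and $\th_{\spec_i}^\G(w) = 1$, then $\t_i$ is dominated and hence not in $\Adm{\spec_i}{\G}$. To do this I will construct an explicit tender $\t_i'$ with the same budget allocation $\B_i$ that dominates $\t_i$, by witnessing conditions (a) and (b) of Def.~\ref{def:dominance}.

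First I would fix $\t_i'$ to agree with $\t_i$ everywhere except that whenever the current history reaches $u$ with enough budget, $\t_i'$ switches to a robust reachability strategy from $u$: since $\th_{\spec_i}^\G(v) < 1$, there is a successor of $u$ (namely $v$, or another successor with threshold $< 1$) from which \PO\ wins the bidding game $\zug{\G, \spec_i}$ with budget strictly above the threshold, so $\t_i'$ plays the winning \PO\ action policy and bidding policy from that point on (Thm.~\ref{thm:RT-reach} and its cycle-handling refinement \cite{LLPSU99}). The subtlety is budget: $\t_i'$ must have more than $\th_{\spec_i}^\G$ of the relevant successor at $u$. I would handle this by having $\t_i'$ mimic $\t_i$'s bids up to the first visit to $u$ — so the two tenders reach $u$ with identical budgets against any fixed opponent — and then argue about the two possible cases at $u$. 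If $\t_i'$'s budget at $u$ already exceeds what is needed to win toward $v$, it switches immediately; otherwise it keeps bidding conservatively (bid $0$) and stays at the same budget, but note that $\t_i$ at $u$ is committing to $w$ with $\th_{\spec_i}^\G(w) = 1$, meaning once the token is at $w$ the opponent can prevent reaching $T_i$ forever — so against a suitable opponent $\comp{\t_i}{\t_2}$ already fails $\spec_i$, and I only need $\t_i'$ to do no worse in general and strictly better against one witness.

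For condition (a), I would argue that for any compatible $\t_2$ with $\comp{\t_i}{\t_2} \models \spec_i$: either the play of $\comp{\t_i}{\t_2}$ never reaches $u$, in which case $\comp{\t_i'}{\t_2}$ produces the identical play (the tenders differ only at $u$), or it reaches $u$; but since $\actpol_i(u) = w$ and $\th_{\spec_i}^\G(w) = 1$, reaching $w$ means there is no path to $T_i$ from $w$ — so actually the play must not reach $u$ for $\comp{\t_i}{\t_2}$ to satisfy $\spec_i$ (recall $\th_{\spec_i}^\G(w) = 1$ means no path from $w$ to $T_i$; but $T_i$ might be reached \emph{before} $u$ — so I must be careful: condition (a) really only needs that $\comp{\t_i'}{\t_2}$ also satisfies $\spec_i$, and if the target was already hit before $u$ this is automatic; if not, then $\comp{\t_i}{\t_2}$ goes to $w$ from which $T_i$ is unreachable, contradicting $\comp{\t_i}{\t_2}\models\spec_i$, so this case is vacuous). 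For condition (b), I would exhibit a single compatible $\t_2$: one that forces the token to $u$ (possible since the threshold structure and compatibility give $\t_i$ — hence $\t_i'$ up to $u$ — no way to avoid being driven somewhere, and we can pick $\t_2$ that drives to $u$; if $u$ is not reachable from $\vinit$ at all the statement is vacuous via admissibility being about witnessed dominance, so assume it is reachable and $\t_2$ can be built by composing a path-forcing bidding strategy to $u$ with an arbitrary tail), ensures $\t_i'$ arrives at $u$ with budget exceeding $\th_{\spec_i}^\G(v)$ (by giving $\t_i'$ a high enough starting budget, which is legitimate since we only require $B_i^0 > \B_i$), and then plays adversarially toward $\spec_i$ afterward; then $\comp{\t_i'}{\t_2}$ wins via the robust strategy while $\comp{\t_i}{\t_2}$ goes to $w$ and loses.

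\textbf{Main obstacle.} The delicate point is the budget bookkeeping at $u$: I must ensure that against the witness opponent of condition (b), the modified tender $\t_i'$ genuinely arrives at $u$ with budget strictly greater than $\th_{\spec_i}^\G(v)$ (or of whichever successor of $u$ has threshold below $1$), while simultaneously ensuring in condition (a) that for \emph{every} compatible opponent, $\t_i'$ never fails when $\t_i$ succeeds — which forces $\t_i'$ to coincide with $\t_i$ on all histories not passing through $u$ and to be "safe" on those that do. Reconciling "strictly better on one opponent" with "never worse on any opponent" under the constraint of equal budget allocation is the crux; the cleanest route is to observe that the only way $\comp{\t_i}{\t_2} \models \spec_i$ while the play visits $u$ is if $T_i$ was already visited strictly before $u$ (since $w$ has threshold $1$), making the post-$u$ behavior of $\t_i'$ irrelevant for (a), so $\t_i'$ is free to switch to the robust strategy at $u$ with no risk.
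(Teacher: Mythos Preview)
Your plan has a gap in the condition~(a) case analysis. You assert that the only way $\comp{\t_i}{\t_2} \models \spec_i$ while the play visits $u$ is if $T_i$ was visited strictly before $u$. This overlooks the case where the play reaches $u$ and $\t_2$ \emph{wins the bid there}: then the token need not go to $w$, and $T_i$ may well be reached afterward. Since your $\t_i'$ alters the \emph{bid} at $u$ (switching to the robust strategy's bid, or to $0$ when the budget is too low), the plays of $\comp{\t_i}{\t_2}$ and $\comp{\t_i'}{\t_2}$ can diverge exactly in this case, and you have not argued that $\t_i'$ still succeeds. Your ``cleanest route'' in the Main Obstacle paragraph is precisely this flawed claim, so the obstacle is not actually discharged.

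The paper closes the gap with a much simpler construction: let $\t_i'$ differ from $\t_i$ \emph{only in the action at $u$} (choosing $v$ instead of $w$), keeping the bidding policy identical everywhere. Then the plays of $\comp{\t_i}{\t_2}$ and $\comp{\t_i'}{\t_2}$ are identical unless and until $\t_i$ actually wins a bid at $u$; in that event $\t_i$ proceeds to $w$, from which $T_i$ is unreachable (targets are sinks and $\th_{\spec_i}^\G(w)=1$), so $\comp{\t_i}{\t_2} \not\models \spec_i$ and condition~(a) is vacuous. For condition~(b) the paper uses a \emph{cooperative} witness $\t_{-i}$ that bids zero at $u$ and thereafter helps reach $T_i$ from $v$ --- possible because $\th_{\spec_i}^\G(v) < 1$ guarantees a path from $v$ to $T_i$ exists. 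This sidesteps your adversarial witness, the robust-strategy machinery, and all the budget bookkeeping you flag as the main difficulty.
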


\begin{proof}
	We argue that choosing $w$ from $u$ is dominated by the action of choosing $v$ from $u$, no matter what the budget at $u$ is.
	Firstly, Cond.~(a) of Def.~\ref{def:dominance} trivially holds.
	Secondly, consider the other tender $\t_{-i}$ which bids zero at $u$, and later cooperates with $\t_i$ to satisfy $\spec_i$.
	Clearly, the $\t_i$'s action policy that selects $v$ at $u$ will be able to satisfy  $\spec_i$, but the one that selects $w$ will not.
	\qed
\end{proof}

We obtain the reduced graph by omitting edges that are dominated for both players. 
For example, in Fig.~\ref{fig:motivating example:admissible}, the edge $\zug{b,c}$ is dominated for both players (see Ex.~\ref{ex:motivation:assume-admissible}) and in Fig.~\ref{fig:motivating example:contract}, no edge is dominated for both players (see Ex.~\ref{ex:motivation:assume-guarantee}).

\begin{definition}[Largest admissible sub-graphs for reachability]
The largest admissible sub-graph of $\G$ with respect to two reachability objectives $\spec_1$ and $\spec_2$  is  $\las{\G}{\spec_1}{\spec_2}=\tup{\V',\E'}$ with 
$\V' = \V\setminus \set{ v\in\V \mid \th_{\spec_1}^\G(v)=1 \land \th_{\spec_2}^\G(v)=1}$ and $\E' = (\V'\times \V') \cap \E$.
We omit \(\spec_1, \spec_2\) from \(\las{\G}{\spec_1}{\spec_2}\) when it is clear from the context. 
\end{definition}

For a vertex $v$ in $\G$ and $i \in \set{1,2}$, recall that $\thresh{v}{\G}{\spec_i}$ denotes the threshold in $\G$ for objective $\spec_i$, and $\thresh{v}{\las{\G}}{\spec_i}$ denotes the threshold in the reduced graph. Observe that a winning strategy in $\G$ will never cross a dominated edge. Removing dominated edges restricts the opponent, thus $\thresh{v}{\G}{\spec_i} \geq \thresh{v}{\las{\G}}{\spec_i}$. The next lemma shows that, surprisingly, the decrease in sum of thresholds is guaranteed to be significant. The proof which holds for non-binary graphs, intuitively follows from observing that in $\las{\G}$, necessarily a sink that is a target for one of the players is reached, and since there is an overlap in at least one sink, the sum of thresholds is at most $1$.

\begin{restatable}[On the sum of thresholds in $\las{\G}$]{lemma}{lemAaReach}\label{lem:aa:reach:sum of thresholds leq 1}
	For every vertex $v$, we have $\th_{\spec_1}^{\las{\G}}(v) + \th_{\spec_2}^{\las{\G}}(v) \leq 1$. Moreover, if $\G$ is binary then $\th_{\spec_1}^{\las{\G}}(v) + \th_{\spec_2}^{\las{\G}}(v) < 1$.
\end{restatable}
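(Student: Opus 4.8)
The plan is to prove a single uniform bound $\th_{\spec_1}^{\las{\G}}(v) + \th_{\spec_2}^{\las{\G}}(v) \le 1$ for every vertex $v$ of $\las{\G}$, and then strengthen it to a strict inequality in the binary case. I would work entirely inside $\las{\G}$, so let me write $\G' \coloneqq \las{\G}$ and $\th_i \coloneqq \th_{\spec_i}^{\G'}$ for brevity. The key structural fact is that, by construction of $\G'$, every vertex $v$ has $\th_{\spec_1}^{\G}(v) < 1$ or $\th_{\spec_2}^{\G}(v) < 1$; and more is true after we also discard edges that are dominated for both players (Lemma~\ref{lem:necessary conditions for admissibility}): from every vertex of $\G'$, \emph{every} remaining outgoing edge leads to a vertex from which at least one target is still reachable, and moreover every sink of $\G'$ lies in $T_1 \cup T_2$ (a sink with no target would have had both thresholds $1$ in $\G$ and would have been deleted). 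The crucial consequence is that in the zero-sum bidding game on $\G'$ for objective $\spec_i$, \PT cannot force avoidance of $T_1 \cup T_2$ forever — every play eventually hits a sink, and every sink is in $T_1 \cup T_2$.

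The heart of the argument is the following claim, which I would prove by the standard energy/threshold invariant of Theorem~\ref{thm:RT-reach} (Eq.~\eqref{eq:bidding games:reach threshold}): consider a combined game on $\G'$ where \PO plays with budget $B$ for $\spec_1$ and, simultaneously, a hypothetical second copy with budget $1-B$ for $\spec_2$; the point is that $\th_1$ and $\th_2$ are \emph{dual} with respect to the overlap. Concretely, I would argue: since every sink $t$ of $\G'$ is in $T_1 \cup T_2$, we have $\th_1^{\G'}(t) = 0$ or $\th_2^{\G'}(t) = 0$, hence $\th_1(t) + \th_2(t) \le 1$ at every sink (in fact $\le 1$ always, and $=0$ at a sink in $T_1 \cap T_2$, which is nonempty since $\spec_1,\spec_2$ overlap on at least one sink after the normalization). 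Then I propagate this inequality backwards: if $\th_1(u) + \th_2(u) \le 1$ fails to hold at some vertex, take such a $v$ maximizing $\th_1(v)+\th_2(v)$; using $\th_1(v) = \tfrac12(\th_1(v^+_1) + \th_1(v^-_1))$ and $\th_2(v) = \tfrac12(\th_2(v^+_2)+\th_2(v^-_2))$ where $v^\pm_i$ are the successors extremal for $\spec_i$, I bound $\th_1(v)+\th_2(v)$ by an average of sums $\th_1(w)+\th_2(w)$ over successors $w$ of $v$ — which by maximality are all $\le \th_1(v)+\th_2(v)$ — and combine with the sink base case. The cleanest way to make the averaging work is to note $\th_1(v)+\th_2(v) \le \tfrac12\big(\max_w \th_1(w) + \th_2(w)\big) + \tfrac12\big(\min_w \th_1(w)+\th_2(w)\big)$ is not literally what the recurrence gives (the extremal successors for the two objectives may differ), so the actual argument is: $\th_1(v) + \th_2(v) = \tfrac12(\th_1(v^+_1)+\th_1(v^-_1)) + \tfrac12(\th_2(v^+_2)+\th_2(v^-_2)) \le \tfrac12\big((\th_1(v^+_1)+\th_2(v^+_1)) + (\th_1(v^-_2)+\th_2(v^-_2))\big)$, using that $\th_1(v^-_1)\le \th_1(v^-_2)$ and $\th_2(v^+_2) \le \th_2(v^+_1)$ by extremality, and each parenthesized term is $\le 1$ by induction/maximality. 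For the induction to bottom out I would process vertices in order of a suitable potential (e.g.\ the value $\max(\th_1,\th_2)$ restricted away from sinks, or simply run a fixpoint/attractor argument), which is exactly the style of argument underlying Theorem~\ref{thm:RT-reach}.

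For the strict inequality when $\G$ is binary: here every non-sink vertex has exactly the two successors $v^+_i, v^-_i$ coinciding for both objectives ($v^\pm_1 = v^\pm_2 =: v^\pm$ up to ties), so $\th_1(v)+\th_2(v) = \tfrac12\big((\th_1(v^+)+\th_2(v^+)) + (\th_1(v^-)+\th_2(v^-))\big)$ exactly. Since there is a path from $v$ to a sink in $T_1 \cap T_2$ (reachability of \emph{some} target from every vertex of $\G'$, plus the nonempty overlap sink), and at that sink the sum is $0 < 1$, the average-of-$\le 1$-values-including-a-$<1$-value argument forces strict inequality at $v$; I would make this precise by showing the set $\{v : \th_1(v)+\th_2(v) = 1\}$ is closed under predecessors within $\G'$ and hence, if nonempty, would contain no path to the $T_1\cap T_2$ sink, contradicting reachability. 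The main obstacle I anticipate is handling the \emph{non-binary} extremal-successor mismatch cleanly — making sure the two objectives' optimal successor choices can be decoupled in the averaging bound without circularity in the induction; getting the ordering/potential right so the backward propagation is genuinely well-founded (rather than circular through a cycle in $\G'$) is where the care is needed, and is precisely why the lemma statement emphasizes that the proof "holds for non-binary graphs."
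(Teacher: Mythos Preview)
Your plan for the binary case (strict inequality) is essentially the paper's argument: once the weak inequality $\th_1+\th_2\le 1$ is available everywhere, the exact averaging formula in a binary graph propagates strictness from the sink in $T_1\cap T_2$ back along any path, whether phrased as induction on distance (the paper) or as ``the set $\{\th_1+\th_2=1\}$ is closed under successors and hence cannot reach $T_1\cap T_2$'' (your version).

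The gap is in your treatment of the weak inequality for \emph{non-binary} graphs. Your key step claims
\[
\th_1(v^-_1)+\th_2(v^+_2)\ \le\ \th_2(v^+_1)+\th_1(v^-_2),
\]
justified by ``$\th_1(v^-_1)\le \th_1(v^-_2)$ and $\th_2(v^+_2)\le \th_2(v^+_1)$.'' The first of these is fine ($v^-_1$ minimizes $\th_1$), but the second is backwards: $v^+_2$ is the \emph{maximizer} of $\th_2$ among successors, so $\th_2(v^+_2)\ge \th_2(v^+_1)$, not $\le$. With the inequality reversed the bound collapses, and there is no evident repair by choosing a different pairing of successors---the mismatch between the extremal successors for the two objectives is exactly the obstruction you flagged at the end, and it is real.

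The paper sidesteps this entirely. Instead of averaging, it invokes the zero-sum duality $\th_{\spec_1}^{\las{\G}}(v)=1-\th_{\spec_1^c}^{\las{\G}}(v)$ from Theorem~\ref{thm:RT-reach} and then compares $\spec_2$ with $\spec_1^c$: in $\las{\G}$ every sink lies in $T_1\cup T_2$, so (at the level of sinks) the safety objective $\spec_1^c$ is no easier for \PO than the reachability objective $\spec_2$, giving $\th_{\spec_2}^{\las{\G}}(v)\le \th_{\spec_1^c}^{\las{\G}}(v)=1-\th_{\spec_1}^{\las{\G}}(v)$ directly. This one-line monotonicity argument needs no induction, no well-founded ordering on cyclic graphs, and no control over which successors are extremal---so it handles arbitrary out-degree for free. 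I recommend you adopt that route for Part~1 and keep your Part~2 as is.
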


\begin{proof}
	Let $\G,\las{\G},\spec_1=\reach_{\las{\G}}(T_1),\spec_2=\reach_{\las{\G}}(T_2)$ are as in the statement of the lemma.
	By the definition of thresholds, we know that $\th_{\spec_1}^{\las{\G}}(v) = 1-\th_{\spec_1^c}^{\las{\G}}(v)$.
	Since every target vertex in $T_1$ and $T_2$ is a sink vertex, and since we removed every vertex $v$ with $\th_{\spec_1}^\G(v)=\th_{\spec_2}^\G(v)=1$ in $\las{\G}$, hence we know that $\las{\G}$ does not have any sink vertex that is outside of $T_1\cup T_2$, as all such vertices have threshold $1$ for both objectives and got removed.
	Moreover, because $T_1 \cap T_2\neq\emptyset$, hence $T2 \subseteq T_1^c$, implying $\spec_2 \subseteq \spec_1^c$.
	It can be shown, by inductively applying the threshold operator from Eq.~\eqref{eq:bidding games:reach threshold} starting at the sink vertices and reaching up to the root vertices, that $\th_{\spec_2}^{\las{\G}}(v) \leq \th_{\spec_1^c}^{\las{\G}}(v)$.
	Therefore, we have $\th_{\spec_1}^{\las{\G}}(v) \leq 1- \th_{\spec_2}^{\las{\G}}(v)$, implying the claim.

	We proceed to show that when $\G$ is a binary graph, we have $\th_{\spec_1}^{\las{\G}{\spec_1}{\spec_2}}(v) + \th_{\spec_2}^{\las{\G}{\spec_1}{\spec_2}}(v) < 1$.
	The proof is via induction over the length of the shortest path from $v$ to any vertex in $T_1\cap T_2$, which is unique and bounded (by the number of vertices) for every $v$ and henceforth called the \emph{distance} between $v$ and $T_1\cap T_2$.
	The base case is when the distance is $0$, i.e., for every vertex $v$ in $T_1\cap T_2$, for which we immediately obtain $\th_{\spec_1}^{\las{\G}}(v) + \th_{\spec_2}^{\las{\G}}(v) = 0 + 0 < 1$.
	Now suppose for every vertex $v$ with distance $d$, the claim holds.
	For every vertex $v$ with distance $d+1$, we know that $v$ has a successor $v'$ whose distance is $d$, so that $\th_{\spec_1}^{\las{\G}}(v') + \th_{\spec_2}^{\las{\G}}(v') < 1$.
	Let $v''$ be the other successor of $v$ with any arbitrary distance.
	Using the characterization of thresholds from \eqref{eq:bidding games:reach threshold}, and observing that for two successors, the distinction between $v^-$ and $v^+$ is immaterial:
	\begin{align*}
		\th_{\spec_1}^{\las{\G}}(v) + \th_{\spec_2}^{\las{\G}}(v)  &= 
		\frac{1}{2}\left( \th_{\spec_1}^{\las{\G}}(v') + \th_{\spec_1}^{\las{\G}}(v'') \right) + 
		\frac{1}{2}\left( \th_{\spec_2}^{\las{\G}}(v') + \th_{\spec_2}^{\las{\G}}(v'') \right)\\
		&= \frac{1}{2}\underbrace{\left(\th_{\spec_1}^{\las{\G}}(v') + \th_{\spec_2}^{\las{\G}}(v')\right)}_{< 1 \text{ (induction hypothesis)}} + 
		\frac{1}{2}\underbrace{\left(\th_{\spec_1}^{\las{\G}}(v'') + \th_{\spec_2}^{\las{\G}}(v'')\right)}_{\leq 1 \text{ (Lem.~\ref{lem:aa:reach:sum of thresholds leq 1})} }\\
		&< \frac{1}{2}\cdot 1 + \frac{1}{2}\cdot 1\\
		&= 1. 
	\end{align*} \qed
\end{proof}

Our synthesis procedure proceeds as in strong decentralized synthesis: Find and output a pair of robust tenders in $\las{\G}$, which are guaranteed to exist when $\G$ is binary. In order to maintain soundness (see Prop.~\ref{prop:aa-sound}), it is key to show that a robust tender $\t_i$ in $\las{\G}$ is admissible in $\G$. The proof of the following lemma is intricate. We show that even when one can find $\t'_i$ and $\t_{-i}$ such that $\comp{\t_i}{\t_{-i}} \not \models \spec_i$ but $\comp{\t_i'}{\t_{-i}}  \models \spec_i$, it is possible to construct $\t_{-i}'$ for which $\comp{\t_i}{\t_{-i}'} \models \spec_i$ but $\comp{\t_i'}{\t_{-i}'} \not \models \spec_i$, thus $\t'_i$ does not dominate $\t_i$. 
Furthermore, such a tender wins against a set of tenders which \emph{over-approximates} admissible tender for \(\spec_{-i}\). 

\begin{restatable}[Algorithm for computing admissible-winning tenders]{lemma}{robustToAdmissible}
	\label{lemm: robust-to-admissible}
For $i \in \set{1,2}$, a robust tender for $\spec_i$ in $\las{\G}$ is \(\spec_{-i}\)-admissible-winning for $\spec_i$ in $\G$. 
\end{restatable}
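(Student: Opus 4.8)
The plan is to prove the statement in two parts: first that a robust tender $\t_i$ in $\las{\G}$ is $\spec_i$-admissible in $\G$ (membership in $\Adm{\spec_i}{\G}$), and second that it wins against every compatible $\spec_{-i}$-admissible tender in $\G$. For the second part I would actually prove something stronger, as hinted: I would identify a set of ``safe'' tenders $\mathcal{S}_{-i}$, defined by the property that their action policy never crosses an edge that leaves $\las{\G}$ (equivalently, never moves from a vertex with $\th_{\spec_{-i}}^\G < 1$ to a vertex with $\th_{\spec_{-i}}^\G = 1$, and more generally stays inside the vertex set $\V'$ of $\las{\G}$). By Lem.~\ref{lem:necessary conditions for admissibility}, every $\spec_{-i}$-admissible tender is in $\mathcal{S}_{-i}$, so it suffices to beat all of $\mathcal{S}_{-i}$. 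The key point is that if $\t_{-i} \in \mathcal{S}_{-i}$, then the composition $\comp{\t_i}{\t_{-i}}$ stays inside $\las{\G}$: $\t_i$ stays inside because it is a robust tender on $\las{\G}$ (its action policy only chooses $\las{\G}$-edges), and $\t_{-i}$ stays inside because it is safe. Hence the play of $\comp{\t_i}{\t_{-i}}$ from the appropriate initial configuration coincides with a play in the bidding game $\zug{\las{\G}, \spec_i}$ in which $\t_i$ plays its winning \PO strategy; since $\th_{\spec_i}^{\las{\G}}(\vinit) + \th_{\spec_{-i}}^{\las{\G}}(\vinit) < 1$ by Lem.~\ref{lem:aa:reach:sum of thresholds leq 1} (binary case), we can allocate budgets so that $\t_i$'s budget exceeds its threshold and thus $\t_i$ wins, i.e., $\comp{\t_i}{\t_{-i}} \models \spec_i$.

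Next I would handle admissibility of $\t_i$ in $\G$ (clause $\t_i \in \Adm{\spec_i}{\G}$). Suppose for contradiction that $\t_i$ is dominated by some $\t_i'$. Then Cond.~(b) of Def.~\ref{def:dominance} gives a compatible $\t_{-i}$ with $\comp{\t_i'}{\t_{-i}} \models \spec_i$ but $\comp{\t_i}{\t_{-i}} \not\models \spec_i$. I need to derive a contradiction with Cond.~(a), i.e., exhibit a compatible $\t_{-i}'$ with $\comp{\t_i}{\t_{-i}'} \models \spec_i$ and $\comp{\t_i'}{\t_{-i}'} \not\models \spec_i$. The construction of $\t_{-i}'$ is the technical heart of the argument. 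The idea: first note that since $\t_i$'s action policy never leaves $\las{\G}$ and $\t_i$ is robust there, the only way $\comp{\t_i}{\t_{-i}} \not\models \spec_i$ is that $\t_{-i}$ drives the play out of $\las{\G}$ — to a vertex $w$ with $\th_{\spec_i}^\G(w) = 1$, i.e., from which $T_i$ is unreachable — before $T_i$ is visited. Let $h c$ be the history at the first step where $\t_{-i}$ wins the bidding and moves from the current vertex $u \in \V'$ to such a $w$ (it must be $\t_{-i}$ who makes this move, since $\t_i$ never does). Now build $\t_{-i}'$ to mimic $\t_{-i}$ up through history $h$, and from $c$ onward: if the play so far was generated against $\t_i$, continue exactly as $\t_{-i}$ (reaching $w$, hence $\not\models \spec_i$ against $\t_i$? wait — we want $\comp{\t_i}{\t_{-i}'} \models \spec_i$); so instead, from $c$ onward, $\t_{-i}'$ should bid zero and cooperate, letting $\t_i$ drive towards $T_i$ inside $\las{\G}$ — which $\t_i$ can do because it is robust and we maintain the budget invariant. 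Against $\t_i'$, however, the history is different; $\t_{-i}'$ reproduces the bids $\t_{-i}$ made against $\t_i'$ and thus reproduces the losing play. The care needed is to make $\t_{-i}'$'s behaviour depend only on the observable history (vertices and budgets), and to check that the two relevant runs against $\t_i$ and against $\t_i'$ are genuinely distinguished by their histories up to the branching point — this uses that the action policies $\actpol_i, \actpol_i'$ differ on the relevant prefix or that the budget sequences differ.

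I would structure the write-up as: (1)~define $\mathcal{S}_{-i}$ and observe $\Adm{\spec_{-i}}{\G} \subseteq \mathcal{S}_{-i}$ via Lem.~\ref{lem:necessary conditions for admissibility}; (2)~show compositions of $\t_i$ (robust in $\las{\G}$) with any $\t_{-i} \in \mathcal{S}_{-i}$ stay in $\las{\G}$ and satisfy $\spec_i$, using Lem.~\ref{lem:aa:reach:sum of thresholds leq 1} to pick compatible budgets; (3)~prove $\t_i \in \Adm{\spec_i}{\G}$ by the domination-swapping construction above. The main obstacle is step~(3): making the construction of $\t_{-i}'$ precise so that it is a well-defined tender (a function of history only), so that it is compatible (threshold $\B_{-i}' < 1$ with room for $\t_i$'s threshold, which Lem.~\ref{lem:aa:reach:sum of thresholds leq 1} provides), and so that it simultaneously cooperates against $\t_i$ and sabotages against $\t_i'$ — in particular arguing that the two interaction histories diverge at a point $\t_{-i}'$ can detect. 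A subtlety to flag: when $\t_i$ and $\t_i'$ are required to have the same budget allocation (per Def.~\ref{def:dominance}), the divergence of histories comes purely from differing action choices, so I must ensure that the first place $\actpol_i \neq \actpol_i'$ (or the induced paths differ) occurs no later than the point where $\t_{-i}$ would drive the play out of $\las{\G}$, and handle the case where $\actpol_i'$ itself leaves $\las{\G}$ (then $\t_i'$ is not even robust there, and a cooperating $\t_{-i}'$ already beats it). I expect the bookkeeping around ``which history are we in'' to be where most of the care goes.
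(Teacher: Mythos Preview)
Your plan for steps~(1) and~(2) --- defining $\mathcal{S}_{-i}$ as tenders whose action policy stays in $\las{\G}$, invoking Lem.~\ref{lem:necessary conditions for admissibility} for $\Adm{\spec_{-i}}{\G}\subseteq\mathcal{S}_{-i}$, and then using robustness of $\t_i$ in $\las{\G}$ to beat every member of $\mathcal{S}_{-i}$ --- is correct and coincides with the paper's argument for the ``winning against admissible'' half.

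The gap is in step~(3), the admissibility of $\t_i$ in $\G$. Your framework (assume $\t_i'$ dominates $\t_i$, exhibit $\t_{-i}'$ with $\comp{\t_i}{\t_{-i}'}\models\spec_i$ but $\comp{\t_i'}{\t_{-i}'}\not\models\spec_i$) is right, but your construction of $\t_{-i}'$ does not achieve the second conjunct. You propose that against $\t_i'$, the tender $\t_{-i}'$ ``reproduces the bids $\t_{-i}$ made against $\t_i'$ and thus reproduces the losing play.'' But by hypothesis $\comp{\t_i'}{\t_{-i}}\models\spec_i$, so replaying $\t_{-i}$ against $\t_i'$ makes $\t_i'$ \emph{win}, not lose. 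More structurally: your $\t_{-i}'$ is ``mimic $\t_{-i}$, except on histories extending the exit point $hc$, cooperate.'' Since the play $\comp{\t_i'}{\t_{-i}}$ must diverge from $\comp{\t_i}{\t_{-i}}$ \emph{before} the exit (otherwise $\t_i'$ would also be dragged to $w$ with $\th_{\spec_i}^\G(w)=1$ and fail), the play against $\t_i'$ never sees the modified branch, and $\comp{\t_i'}{\t_{-i}'}=\comp{\t_i'}{\t_{-i}}\models\spec_i$. So your $\t_{-i}'$ does not witness non-domination.

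The paper's construction is anchored at a different point: the \emph{longest common prefix} $h$ of $\comp{\t_1}{\t_2}$ and $\comp{\t_1'}{\t_2}$, ending at $\zug{v,B_v}$. At this divergence point the paper performs a four-way case analysis on the relative order of the bids $b_1,b_1',b_2$. In each case it designs $\t_2'$'s bid at $v$ so that after one step against $\t_1'$, the budget of $\t_1'$ lies strictly below the $\G$-threshold $\th_{\spec_1}^\G$ at the reached vertex; from there $\t_2'$ switches to a \emph{winning adversarial strategy in the full game $\zug{\G,\spec_1}$}, forcing $\comp{\t_1'}{\t_2'}\not\models\spec_1$. Against $\t_1$, from the same divergence point $\t_2'$ restricts itself to $\las{\G}$, where $\t_1$ wins by robustness. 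The idea you are missing is precisely this: $\t_{-i}'$ must not merely mimic $\t_{-i}$ but must actively play a $\spec_i$-spoiling strategy in $\G$ against $\t_i'$, and the bid-ordering case analysis is what certifies that $\t_{-i}'$ has the budget to do so.
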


\begin{proof}
	We prove it for \(i = 1\), the other case is dual.
	Suppose \(\t_1 = \tup{\actpol, \bidpol, \B}\) is a tender that follows a \PO winning strategy in the game $\zug{\las{\G}, \spec_1}$. 
	Suppose towards contradiction that \(\t_1\) is dominated by a tender \(\t_1' = \tup{\actpol', \bidpol', \B}\). 
	That is, there is a tender \(\t_2 = \tup{\actpol_2, \bidpol_2, \B_2}\) that is compatible with both such that \(\comp{\t_1}{\t_2} \not\sat \spec_1\) but \(\comp{\t_1'}{\t_2} \sat \spec_1\). We will reach a contradiction by constructing another compatible tender \(\t_2' = \tup{\actpol_2', \bidpol_2', \B_2}\) that achieves the opposite, namely \(\comp{\t_1}{\t_2'} \sat \spec_1\) but \(\comp{\t_1}{\t_2'} \not\sat \spec_1\).
	
	We construct $\t_2'$ as follows. 
	Let \(h\) be the longest common prefix of \(\comp{\t_1}{\t_2}\) and \(\comp{\t_1'}{\t_2}\), and the last configuration of \(h\) is \(\zug{v, B_v}\). 
	Denote the actions of $\t_1$ and $\t_1'$ following $h$ respectively by $\zug{b_1, u_1}$ and $\zug{b_1', u_1'}$. Note that these differ since $h$ is the longest common prefix. We construct $\t'_2$ to follow the same actions and bids as $\t_2$ so that the prefix $h$ is generated.
	Let $\zug{u_1, B_1} = \comp{\t_1}{\t_2}(h)$ and $\zug{u_1', B_1'} = \comp{\t'_1}{\t_2}(h)$. 
	Since $h$ is maximal, $\zug{u, B_1} \neq \zug{u', B_1'}$. We show that both budgets are below the adversarial threshold, i.e., $\thresh{u}{\G}{\spec_1} > B_1$ and $\thresh{u'}{\G}{\spec_1} > B_1'$. This means that we can set $\t'_2$ to follow a winning \PT strategy in the game $\zug{\G, \spec_1}$ from $\zug{u_1', B_1'}$ to violate $\spec_1$. On the other hand, we set $\t'_2$ to follow some \PT strategy in $\las{\G}$ from $\zug{u_1,B_1}$, and since $\t_1$ is winning in $\las{\G}$, the objective $\spec_1$ is satisfied. 
	Let $\zug{b_2, u_2}$ denote the action of $\t_2$ following $h$. 
	Note that it cannot be case that $b_2 > b_1, b'_1$ since otherwise, $\comp{\t_1}{\t_2}(h) = \comp{\t'_1}{\t_2}(h)$. 
	We show the claim separately for each of the four remaining orders of $b_1, b'_1$, and $b_2$.

	\begin{enumerate}[(i)]
		\item  \textbf{\(b_1 < b_2 < b_1'\)}.
		\(\t_2'\) bids \(b_2\) as well, and choose whatever \(\t_2\) does. 
		Therefore, \(\t_2'\), after losing the bid at \(v\), can simulate an adversarial strategy for \(\spec_1\), and does so (just like \(\t_2\) does against \(\t_1\)). 
		This makes \(\comp{\t_1'}{\t_2'} \not\models \spec_1\). 
		On the other hand, upon winning the bid at \(v\), \(\t_2'\), and remain within the longest admissible sub-graph \(\las{G}\) of \(\G\). 
		In the latter case, \(\t_1\) is a winning strategy, therefore \(\comp{\t_1}{\t_2'} \models \spec_1\). 
		
		\item \textbf{\(b_1' < b_2 < b_1\)}. 
		Again, \(\t_2'\) bids \(b_2\) as well, but upon winning the bid at \(v\), \(\actpol_2'\) chooses \(v^+\).
		Because \(B_v < \thresh{v}{\spec_1}{\G}\), and \(b_1\) is an optimal bid, when \(\t_1'\) loses the bid again \(\t_2'\), its budget becomes \(B_v + b_2 < \frac{\thresh{v^+}{\spec_1}{\G} + \thresh{v^-}{\spec_1}{\G}}{2} +  \frac{\thresh{v^+}{\spec_1}{\G} - \thresh{v^-}{\spec_1}{\G}}{2} = \thresh{v^+}{\spec_1}{\G}\). 
		Therefore, from \(v^+\), \(\t_2'\) can simulate an adversarial strategy for \(\spec_1\), and does so, making \(\comp{\t_1'}{\t_2'} \not\models \spec_1\). 
		On the other hand, after losing the bidding at \(v\) against \(\t_1\), from \(u\) onwards, \(\t_2'\) only chooses edges within \(\las{G}\). 
		Again, as \(\t_1\) wins against any tender within \(\las{G}\), we have \(\comp{\t_1}{\t_2'} \models \spec_1\). 
		
		\item \(b_2 < b_1 < b_1'\). 
		Here, again \(\t_2'\) simulates \(\t_2\) up to \(v\), and loses the current bid against both \(\t_1\) and \(\t_1'\). 
		But \(t_1'\) remains in a better position (budget wise) than in \(\t_1\). 
		Against \(\t_1'\), \(\t_2\) does what \(\t_2\)'s choices against \(\t_1\). 
		This is well defined, because \(\t_2'\) in a better position (budget-wise) \(\t_2\) to do that, and as a result \(\comp{\t_1'}{\t_2'} \not\models \spec_1\). 
		On the other hand, against \(\t_1\), \(\t_2'\) only chooses edges within \(\las{G}\) from \(u\), making \(\comp{\t_1}{\t_2'} \models \spec_1\). 
		
		\item \(b_2 < b_1' < b_1\). 
		If \(\t_2'\) has a budget such that it can bid some \(b_2'\) satisfying \(b_1' < b_2' < b_1\), \(\bidpol_2'\) does that, thereafter \(\t_2'\) follows as it does in case (ii).
		Otherwise, \(\t_2'\)'s (and \(\t_2\)) budget at \(v, B_2 = 1 - B_v < b_1' < b_1\). 
		Now, \(\t_1\) is by construction an optimal tender for \(\spec_1\). 
		The optimal bid \(b_1\) is strictly more than \(\t_1\)'s budget means \(\thresh{v}{\spec_1}{\G'} = 1\), as then only it becomes urgent to win the current  bid for \(\t_1\). 
		Therefore, we have \(\thresh{v}{\spec_1}{\G'} = (1 + \thresh{v^-}{\spec_1}{\G'})/2 \implies \thresh{v^-}{\spec_1}{\G'} = \thresh{v}{\spec_1}{\G'} - (1 - \thresh{v}{\spec_1}{\G'})\). 
		We already assumed, \(B_v < \thresh{v}{\spec_1}{\G}\). 
		Therefore, 
		\begin{align*}
			B_v - b_1' &< \thresh{v}{\spec_1}{\G} - (1 - B_v) \\
			&< \thresh{v}{\spec_1}{\G} - (1 - \thresh{v}{\spec_1}{\G})\\
			&= \thresh{v^-}{\spec_1}{\G}
		\end{align*}
		Therefore, even when \(\t_2'\) does not have enough budget to bid between \(b_1\) and \(b_1'\), upon losing the bid at \(v\), \(\t_2'\) can choose to play an adversarial strategy from \(v^-\) against \(\t_1'\) has budget \(B_1' = B_v - b_1'\), and choose to play edges only within \(\G'\) against \(t_1\) when it has a budget \(B_1 = B_v - b_1\). 
		Thus, \(\comp{\t_1}{\t_2'} \models \spec_1\), but \(\comp{\t_1'}{\t_2'} \not\models \spec_1\).
	\end{enumerate}
	
	Now, in Lem.~\ref{lem:necessary conditions for admissibility}, we showed that the no admissible tender (for either specifications) takes the edges that we remove from \(\G\) in order to construct \(\las{G}\), i.e, any \(\spec_i\)-admissible tenders can only constitute of edges remaining in \(\G'\) for both \(i\). 
	As a robust tender \(\t_i\) for \(\spec_i\) from \(\las{G}\), when composed with any tenders constituted only by edges present in \(\las{G}\), satisfies \(\spec_i\), we can say that it does so when it is composed with an \(\spec_2\)-admissible tenders (which is a subset of tenders present in \(\las{\G'}\)). \qed
\end{proof}

The following theorem is obtained by combining Lemmas~\ref{lem:aa:reach:sum of thresholds leq 1} and~\ref{lemm: robust-to-admissible}.

\begin{restatable}[Assume-admissible decentralized synthesis for reachability]{theorem}{assumeadmissiblethm}\label{thm:aa:reach}
	The problem  $\aaprob$ is a tautology for  binary graphs: for every binary graph and two overlapping reachability objectives, there exists a pair of compatible admissible-winning tenders. Moreover, the tenders can be computed in PTIME. 
\end{restatable}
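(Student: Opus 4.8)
The plan is to obtain the theorem as a corollary of the strong-synthesis result Thm.~\ref{thm:strong:reach}, applied not to $\G$ but to the largest admissible sub-graph $\las{\G}{\spec_1}{\spec_2}$, together with the two lemmas just stated. As a preliminary step I would use the reduction sketched at the start of this subsection to assume without loss of generality that $T_1$ and $T_2$ are sets of sink vertices, noting in passing that the three-copies gadget used there only redirects edges and never adds successors, so the resulting graph is still binary. Fix, then, a binary graph $\G$ and overlapping reachability objectives $\spec_1 = \reach_\G(T_1)$ and $\spec_2 = \reach_\G(T_2)$. Since the targets are sinks and $\spec_1\cap\spec_2\neq\emptyset$, we have $T_1\cap T_2\neq\emptyset$ and some path from $\vinit$ reaches $T_1\cap T_2$, so $\th_{\spec_1}^{\G}(\vinit)<1$ and $\th_{\spec_2}^{\G}(\vinit)<1$; in particular $\vinit$ is not pruned and belongs to $\las{\G}{\spec_1}{\spec_2}$, which is again binary as a sub-graph of $\G$.

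Next I would carry out the construction in three steps. (1) Build $\las{\G}$: compute the thresholds $\th_{\spec_i}^{\G}(\cdot)$, which is PTIME for binary graphs by Thm.~\ref{thm:RT-reach}, and delete every vertex whose threshold is $1$ for both objectives. (2) Compute $\th_{\spec_1}^{\las{\G}}(\vinit)$ and $\th_{\spec_2}^{\las{\G}}(\vinit)$, again in PTIME by Thm.~\ref{thm:RT-reach} since $\las{\G}$ is binary, and appeal to Lem.~\ref{lem:aa:reach:sum of thresholds leq 1}, whose binary case gives the strict inequality $\th_{\spec_1}^{\las{\G}}(\vinit)+\th_{\spec_2}^{\las{\G}}(\vinit)<1$. (3) Apply Thm.~\ref{thm:strong:reach} to the bidding games $\zug{\las{\G},\spec_1}$ and $\zug{\las{\G},\spec_2}$: since the sum of thresholds at $\vinit$ is below $1$, there is a pair of compatible robust tenders $\t_i=\zug{\actpol_i,\bidpol_i,\th_{\spec_i}^{\las{\G}}(\vinit)}$ in $\las{\G}$, where $\zug{\actpol_i,\bidpol_i}$ is a winning \PO strategy in $\zug{\las{\G},\spec_i}$; these are computable in PTIME on binary graphs (a memoryless action policy plus the cycle-escaping bidding rule of \cite{LLPSU99}).

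Finally I would invoke Lem.~\ref{lemm: robust-to-admissible}: each $\t_i$, being robust for $\spec_i$ in $\las{\G}$, is $\spec_{-i}$-admissible-winning for $\spec_i$ in $\G$, and in particular $\t_i\in\Adm{\spec_i}{\G}$. Compatibility of $\t_1$ and $\t_2$ is immediate from step (2), since $\B_1+\B_2=\th_{\spec_1}^{\las{\G}}(\vinit)+\th_{\spec_2}^{\las{\G}}(\vinit)<1$, so conditions (I)--(III) of $\aaprob$ all hold and the pair $\t_1,\t_2$ witnesses that $\aaprob$ is a tautology on binary graphs; Prop.~\ref{prop:aa-sound} additionally yields $\comp{\t_1}{\t_2}\models\spec_1\cap\spec_2$. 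Every step is polynomial, giving the PTIME claim. The genuinely non-routine ingredients are exactly the two lemmas we may assume --- Lem.~\ref{lem:aa:reach:sum of thresholds leq 1} for the strict budget inequality (which is what makes the tenders compatible rather than merely borderline) and Lem.~\ref{lemm: robust-to-admissible} for transferring admissibility from $\las{\G}$ back to $\G$ --- so the only remaining, and fairly mild, obstacle is checking that the sink-target reduction and the pruning step both preserve binariness and keep $\vinit$ alive.
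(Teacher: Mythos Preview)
Your proposal is correct and follows exactly the paper's approach: the paper states that the theorem ``is obtained by combining Lemmas~\ref{lem:aa:reach:sum of thresholds leq 1} and~\ref{lemm: robust-to-admissible},'' and your three-step argument (build $\las{\G}$, invoke Lem.~\ref{lem:aa:reach:sum of thresholds leq 1} for the strict inequality, then apply strong synthesis and transfer via Lem.~\ref{lemm: robust-to-admissible}) is precisely that combination spelled out. Your extra care about the sink-target reduction and pruning preserving binariness and keeping $\vinit$ alive is a welcome sanity check that the paper leaves implicit.
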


\begin{remark}\label{rem:AA:reach:non-binary}
For general (i.e., non-binary) graphs, $\aaprob$ is not a tautology anymore; a counter-example is given in Ex.~\ref{ex:motivation:assume-guarantee}.
However, the same PTIME algorithm for computing tenders can still be used to obtain a sound solution; the completeness question is left open for future work.
\end{remark}


\subsection{B\"uchi objectives}

In this section, we consider binary graphs with a pair of overlapping B\"uchi objectives. 
We first demonstrate that, unlike reachability, it is not guaranteed that an assume-admissible decentralized solution exists. 

\begin{wrapfigure}{r}{6.2cm}
		\centering
		\vspace{-1cm}
		\begin{tikzpicture}
			\node[state,target=blue]		(a)		at	(0,0)	{$a$};
			\node[state,target=red,initial above]		(b)		[right=of a]		{$b$};
			\node[state,target=blue]		(c)		[right=of b]		{$c$};
			\node[state,target=red]		(d)		[right=of c]		{$d$};
			
			\path[->]
				(a)		edge	[loop left]		()
				(b)		edge					(a)
						edge[bend left]		(c)
				(c)		edge[bend left]		(b)
						edge					(d)
				(d)		edge[loop right]		();
		\end{tikzpicture}
		\caption{A graph with no assume-admissible decentralized solution.}
		\label{fig:aa-buchi:four state}
		\vspace{-0.6cm}
	\end{wrapfigure}
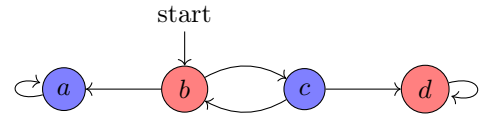
\begin{example}\label{ex:AA-buchi}
	\tikzset{every state/.style={minimum size=0pt}}
\tikzset{%
target/.style={
preaction={#1, fill, opacity=0.5}
}}
	Consider the graph depicted in Fig.~\ref{fig:aa-buchi:four state} with the B\"uchi objectives given by the accepting vertices $S_\red = \set{b,d}$ and $S_\blue = \set{a,c}$. Note that the objectives are overlapping since the path $(bc)^\omega$ satisfies both. We argue that no pair of compatible admissible-winning tenders exist. Note that a robust (hence dominant) red tender forces reaching $d$, thus forcing $\spec_\red$ to be satisfied. Dually, a robust blue tender forces $\spec_\blue$ in $a$. It can be shown that $\thresh{b}{\spec_\red}{\G} = \sfrac{2}{3}$ and $\thresh{b}{\spec_\blue}{\G} = \sfrac{1}{3}$. Thus, for any $\B_\red$ and $\B_\blue$ with $\B_\red+\B_\blue<1$, there is a robust tender that violates the other tender's objective. \hfill$\triangle$
\end{example}

We generalize the concept of largest admissible subgraphs to B\"uchi objectives. 
It is not hard to show that proceeding into a BSCC with an accepting state is admissible. Indeed, Thm.~\ref{thm:bidding-parity} shows that there is a robust (hence admissible) tender in such BSCCs. On the other hand, proceeding to a BSCC with no accepting vertex is clearly not admissible. The largest admissible subgraph is obtained by repeatedly removing BSCC that are not admissible for both tenders. Formally, 
for a given action policy $\actpol$ and a given vertex $v$ of $\G$, we will write $\actpol\not\models_v \spec_1\cup \spec_2$ to indicate that the action policy cannot fulfill $\spec_1\cup\spec_2$ from the initial vertex $v$.

\begin{definition}[Largest admissible sub-graphs for B\"uchi]
	The largest admissible sub-graph $\lasb{\G}$ of $\G$ for the B\"uchi objectives $\spec_1,\spec_2$ is the graph $\tup{\V',\E'}$ with $\V' = \V\setminus \set{ v\in\V \mid \forall \text{ action policy } \actpol\;.\;\actpol\not\models_{v}\spec_1\cup \spec_2}$, and $\E' = (\V'\times \V') \cap \E$.
\end{definition}

We describe a reduction to reachability games. 
For $i \in \set{1,2}$, let $T_i$ denote the union of BSCCs of $\lasb{\G}$ in which there is at least one B\"uchi accepting vertex for $\spec_i$. 
We call $\tup{T_1,T_2}$ the \emph{reachability core} of $\tup{\spec_1,\spec_2}$ in $\lasb{\G}$. Let $\spec_1' = \reach_{\lasb{\G}}(T_1)$ and $\spec_2' = \reach_{\lasb{\G}}(T_2)$. 
We proceed as in strong decentralized synthesis: we find $\thresh{\vinit}{\spec_1'}{\lasb{\G}}$ and $\thresh{\vinit}{\spec_2'}{\lasb{\G}}$ and return a pair of robust tenders if their sum is strictly less than $1$. Note that unlike reachability objectives, in B\"uchi objectives the sum might be $1$ as in Ex.~\ref{ex:AA-buchi}. 
Moreover, as Ex.~\ref{ex:AA-buchi} illustrates, when the sum is $1$, no pair of admissible-winning tenders exist. By adapting results from the previous section, we obtain the following.

\begin{restatable}[Assume-admissible decentralized synthesis for B\"uchi]{theorem}{thmBuchi}
	\label{thm:buchi}
	Let $\G$ be a binary graph and $\spec_1,\spec_2$ be a pair of overlapping B\"uchi objectives.
	Let $\tup{T_1,T_2}$ be the reachability core of $\tup{\spec_1,\spec_2}$ in the largest admissible sub-graph of $\G$ (for $\spec_1,\spec_2$).
	A pair of admissible-winning tenders exists iff $T_1\cap T_2\neq \emptyset$.
	Moreover, $\aaprob$ for B\"uchi objectives is in PTIME.
\end{restatable}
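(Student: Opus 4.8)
The plan is to mirror the reachability development (Lemmas~\ref{lem:aa:reach:sum of thresholds leq 1} and~\ref{lemm: robust-to-admissible}) after performing the B\"uchi-to-reachability reduction described just above the theorem. First I would establish the structural facts about $\lasb{\G}$ and the reachability core $\tup{T_1,T_2}$: by Thm.~\ref{thm:bidding-parity}, inside every BSCC of $\G$ the threshold of every vertex for a B\"uchi objective is $0$ or $1$, and it is $0$ precisely when the BSCC contains an accepting vertex for that objective; hence a robust (and a fortiori admissible) tender exists inside any BSCC with an accepting state, while a tender that commits to a BSCC with no accepting vertex for \emph{either} objective is dominated. Iterating the removal of such ``doubly-losing'' BSCCs (and vertices from which $\spec_1\cup\spec_2$ becomes unreachable) yields $\lasb{\G}$; I would argue, as in Lem.~\ref{lem:necessary conditions for admissibility}, that no $\spec_i$-admissible tender ever uses an edge leaving $\lasb{\G}$, so restricting attention to $\lasb{\G}$ loses nothing. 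In $\lasb{\G}$, fulfilling $\spec_i$ is then equivalent to reaching $T_i$ (a vertex outside $T_1\cup T_2$ in $\lasb{\G}$ lies in a transient part or in a BSCC accepting for the \emph{other} objective only, and both forms of B\"uchi satisfaction inside a BSCC are automatic once entered), so $\spec_i' = \reach_{\lasb{\G}}(T_i)$ is a faithful surrogate.

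For the ``if'' direction, assume $T_1\cap T_2\neq\emptyset$. Treating $T_1,T_2$ as (possibly non-sink) target sets, I would invoke the sink-target gadget sketched before Lem.~\ref{lem:necessary conditions for admissibility} to reduce to sink targets on an enlarged binary graph, apply Lem.~\ref{lem:aa:reach:sum of thresholds leq 1} to get $\thresh{\vinit}{\spec_1'}{\lasb{\G}}+\thresh{\vinit}{\spec_2'}{\lasb{\G}}<1$ (strict, since the graph stays binary), and then use Thm.~\ref{thm:strong:reach} to obtain a pair of compatible robust tenders $\t_1,\t_2$ in $\lasb{\G}$ for $\spec_1',\spec_2'$. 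Since reaching $T_i$ forces $\spec_i$ inside $\lasb{\G}$, these are robust for $\spec_i$ within $\lasb{\G}$; by the same argument as Lem.~\ref{lemm: robust-to-admissible}, a robust tender in $\lasb{\G}$ is $\spec_{-i}$-admissible-winning for $\spec_i$ in $\G$ --- admissibility because it never crosses a removed edge and wins against the over-approximating set of tenders supported on $\lasb{\G}$, and winning because every admissible opponent is itself supported on $\lasb{\G}$. Compatibility of $\t_1,\t_2$ follows from the strict threshold sum.

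For the ``only if'' direction, suppose $T_1\cap T_2=\emptyset$. Then in $\lasb{\G}$ the two sink-ish target families are disjoint, and exactly as in the analysis of Ex.~\ref{ex:AA-buchi} one shows $\thresh{\vinit}{\spec_1'}{\lasb{\G}}+\thresh{\vinit}{\spec_2'}{\lasb{\G}}=1$ (the inequality $\le 1$ is Lem.~\ref{lem:aa:reach:sum of thresholds leq 1}, and disjointness of the cores forbids the strict drop that binary graphs otherwise give). For any budget split $\B_1+\B_2<1$ there is an $i$ with $\B_i\le\thresh{\vinit}{\spec_i'}{\lasb{\G}}$, so $\P\,{-i}$ has a winning (hence robust, hence admissible) tender in $\lasb{\G}$ for $\spec_{-i}'$ that drives the play into a BSCC accepting only for $\spec_{-i}$; this tender is admissible in $\G$ and defeats $\spec_i$, so no $\spec_i$-admissible-winning tender with budget $\B_i$ can coexist with it --- no compatible admissible-winning pair exists. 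Finally, the PTIME claim: computing $\lasb{\G}$ is a linear-time BSCC decomposition with iterated pruning; the reachability cores and the reachability-game thresholds on the binary graph $\lasb{\G}$ are computable in PTIME by Thm.~\ref{thm:RT-reach}; the sink-target gadget only triples the vertex set; and extracting the memoryless winning bidding/action policies and checking the threshold sum are polynomial.

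The main obstacle I anticipate is not the reachability reduction itself but making precise that the admissibility characterization of Lem.~\ref{lemm: robust-to-admissible} transfers through the B\"uchi-to-reachability surrogate: one must check that a tender which is robust for $\spec_i'$ in $\lasb{\G}$ is genuinely admissible for the \emph{original} B\"uchi objective $\spec_i$ in the \emph{original} graph $\G$, i.e., that the dominance-avoidance construction (the four bid-ordering cases) still goes through when ``satisfying $\spec_i$'' means ``producing a B\"uchi-accepting infinite path'' rather than ``hitting a sink,'' and when the opponent may also linger in non-accepting transient regions before the cores are reached. I expect this to require re-running the Lem.~\ref{lemm: robust-to-admissible} argument with ``reaching $T_i$'' read as ``entering a BSCC that is $\spec_i$-accepting,'' together with the observation that once such a BSCC is entered, every infinite continuation inside it is accepting, so the adversarial-simulation steps in cases (i)--(iv) are unaffected.
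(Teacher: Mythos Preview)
Your proposal is correct and follows essentially the same route as the paper: collapse the BSCCs of $\lasb{\G}$ to sinks, invoke the reachability result (Thm.~\ref{thm:aa:reach}) on the resulting targets $T_1,T_2$, and lift the obtained tenders back to B\"uchi tenders. Your treatment is in fact more complete than the paper's displayed proof, which only spells out the ``if'' direction and argues that $\comp{\t_1}{\t_2}\models\spec_1\cap\spec_2$ rather than verifying admissible-winningness of each $\t_i$ individually; your ``only if'' argument and your plan to re-run Lem.~\ref{lemm: robust-to-admissible} at the B\"uchi level are welcome additions.

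Two small points to tighten. First, B\"uchi satisfaction inside an accepting BSCC is not ``automatic once entered'': the tender must explicitly switch, upon entry, to a \PO winning strategy for $\buchi(\buchiset_i)$ inside the BSCC (such a strategy exists from any positive budget since the threshold there is $0$ by Thm.~\ref{thm:bidding-parity}); the paper makes this switch explicit and you should too. Second, in the ``only if'' direction your appeal to Lem.~\ref{lem:aa:reach:sum of thresholds leq 1} for the $\leq 1$ bound is fragile, since that lemma is stated under the overlapping hypothesis which fails when $T_1\cap T_2=\emptyset$. The equality $\th_{\spec_1'}^{\lasb{\G}}(\vinit)+\th_{\spec_2'}^{\lasb{\G}}(\vinit)=1$ follows more directly: after collapsing, every sink lies in $T_1\cup T_2$, so when $T_1\cap T_2=\emptyset$ avoiding $T_1$ and reaching $T_2$ have the same good BSCCs, whence $\th_{\spec_2'}=\th_{(\spec_1')^c}=1-\th_{\spec_1'}$ by Thms.~\ref{thm:RT-reach} and~\ref{thm:bidding-parity}.
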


\begin{proof}
	Let us suppose \(\spec_i = \buchi(\buchiset_i)\) for \(i = \{1, 2\}\). 
	We consider \(\G'\) that is obtained from \(\lasb{G}\) after replacing every BSCC \(D\) with sink \(d\), as explained before.  
	From Thm.~\ref{thm:aa:reach}, \(\zug{G', T_1, T_2}\) is in \(\aaprob\). 
	Therefore, we get a pair of tenders \(\t_1' = \zug{\actpol_1', \bidpol_1', \B_1}\) and \(\t_2' = \zug{\actpol_2', \bidpol_2', \B_2}\) as certificates, and by design they are compatible. 
	We now construct tenders \(\t_1 = \zug{\actpol_1, \bidpol_1, \B_1}\) and \(\t_2 = \zug{\actpol_1, \bidpol_2, \B_2}\), such that \(\actpol_i\) follows \(\actpol_i'\), and \(\bidpol_i'\) follows \(\bidpol_i'\) before reaching any BSCCs. 
	Because \(\comp{\t_1'}{\t_2'} \models T_1 \cap T_2\), \(\comp{\t_1}{\t_2}\) reach a BSCC \(\S\) such that \(\S \cap B_i \neq \emptyset\) for \(i = \{1, 2\}\).
	Within \(\S\), we let \(\t_i\) follow \PO winning strategy for the bidding game \(\S, \buchi(\buchiset_i)\), which exists because the threshold budget for \buchiword	objective is 0. 
	Thus, \(\comp{\t_1}{\t_2} \models \spec_1 \cap \spec_2\).  \qed
\end{proof}

Like reachability (see Rem.~\ref{rem:AA:reach:non-binary}), for B\"uchi objectives the same algorithm for $\aaprob$ for binary graphs can be used to obtain a sound solution for general graphs, and the completeness question is left open for future work.


\section{Assume-Guarantee Decentralized Synthesis}
\label{sec:assume-guarantee}

We present the assume-guarantee decentralized synthesis problem, the one with the highest degree of synchronization among the tenders, with the benefit of the most applicability.
In this synthesis procedure, we assume that we are given a pair of languages $\Ass_1,\Ass_2\subseteq \V^\omega$, called the \emph{assumptions}.
Intuitively, each tender $\t_i$ can \emph{assume} $\Ass_i$ is fulfilled by the other tender, and, in return, needs to \emph{guarantee} that $\Ass_{-i}$ is fulfilled, in addition to fulfilling own objective.

\begin{definition}[Contract-abiding tenders]\label{def:contract satisfaction}
	Let $\G$ be a graph, $\spec_i$ be an $\omega$-regular objective, and $\Ass_1,\Ass_2$ be a pair of $\omega$-regular assumptions in $\G$.
	We say a tender $\t_i=\tup{\actpol_i,\cdot,\cdot}\in \Tenders^\G$ fulfills $\spec_i$ under the contract $\tup{\Ass_1,\Ass_2}$, written $\t_i\models \AG{\Ass_i}{\spec_i}{\Ass_{-i}}$, iff 
	\begin{enumerate}[(a)]
		\item for every finite path $\rho$, if $\rho$ is in $\pref(\Ass_i)$, then $\rho\cdot\actpol_i(\rho)\in \pref(\Ass_1\cap\Ass_2)$, and
		\item for every other compatible tender $\t_{-i}\in\Tenders^\G$, we have $\comp{\t_i}{\t_{-i}}\models \safe_\G(\pref(\Ass_1\cap\Ass_2))\implies \left(\Ass_{-i}\land \left( \Ass_i\implies \spec_i \right)\right)$.
	\end{enumerate}
\end{definition}

Here, each tender $\t_i$ only make safety assumption on the other tender (Cond.~(a)), namely that the path does not leave the safe set $\pref(\Ass_i)$, and in return, provides full guarantee on $\Ass_{-i}$ (Cond.~(b)).
Normally, safety assumptions are not enough for fulfilling liveness guarantees and objectives~\cite{amla2001assume}.
But in bidding games, within the safe set, the players can use a known bidding tactic~\cite{AHC17} to accumulate enough budgets from time to time to reach the liveness goals always eventually.
We use $\Ass_1,\Ass_2$ as $\omega$-regular sets, though we conjecture that safety assumptions suffice.
The assume-guarantee distributed synthesis problem asks to compute a pair of tenders that fulfill their objectives under the given contract, as stated below.

\begin{myproblem}[$\agprob$]\label{prob:ag synthesis}
	Define $\agprob$ as the problem that takes as input a tuple $\tup{\G,\spec_1,\spec_2,\Ass_1,\Ass_2}$, 
	where $\G$ is a graph,
	$\spec_1$ and $\spec_2$ are overlapping objectives in $\G$,
	and $\Ass_1$ and $\Ass_2$ are two $\omega$-regular languages over $\V$ with $\vinit\in\pref(\Ass_1\cap\Ass_2)$,
	and the goal is to decide whether there exists a pair of tenders $\t_1 ,\t_2  \in \Tenders^\G$ such that:
		\begin{enumerate}[(I)]
			\item $\t_1$ and $\t_2$ are compatible, 
			\item $\t_1\models \AG{\Ass_1}{\spec_1}{\Ass_2}$, and
			\item $\t_2\models\AG{\Ass_2}{\spec_2}{\Ass_1}$. 
		\end{enumerate}
\end{myproblem}

When the assumptions allow all behaviors, i.e., $\Ass_1 =\Ass_2 = \V^\omega$, then $\agprob$ is equivalent to $\strongprob$.
On the other hand, when the assumptions allow only each other's objectives, i.e., $\Ass_1=\spec_1$ and $\Ass_2=\spec_2$, then we obtain a purely cooperative synthesis algorithm.
We prove that satisfaction of the contracts by a pair of tenders will imply satisfaction of $\spec_1\cap\spec_2$.

\begin{restatable}[Sound composition of contract-abiding tenders]{proposition}{assumeGuaranteeProofSound}
	\label{prop:assume-guarantee:proof rule soundness}
	Let $\t_1$ and $\t_2$ be tenders that fulfill the requirements stated in Prob.~\ref{prob:ag synthesis}.
	Then, $\comp{\t_1}{\t_2}\models \spec_1\cap\spec_2$.
\end{restatable}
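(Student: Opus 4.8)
The plan is to show that the path $\pi \coloneqq \proj{\V}{\out(\comp{\t_1}{\t_2})}$ induced by the composition lies in $\spec_1 \cap \spec_2$. I would split the argument into two phases: first prove that the run of $\comp{\t_1}{\t_2}$ never leaves the common safe set $\pref(\Ass_1 \cap \Ass_2)$, and then use this fact to discharge the two contracts against each other.

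For the first phase, I would show by induction on $k$ that every length-$k$ prefix of $\pi$ belongs to $\pref(\Ass_1 \cap \Ass_2)$. The base case is the hypothesis $\vinit \in \pref(\Ass_1 \cap \Ass_2)$ from Prob.~\ref{prob:ag synthesis}. For the inductive step, let $\rho$ be a prefix of $\pi$ with $\rho \in \pref(\Ass_1 \cap \Ass_2)$. Since any prefix of a word in $\Ass_1 \cap \Ass_2$ is also a prefix of a word in each $\Ass_j$, we have $\pref(\Ass_1 \cap \Ass_2) \subseteq \pref(\Ass_1) \cap \pref(\Ass_2)$, so in particular $\rho \in \pref(\Ass_i)$ for the index $i \in \set{1,2}$ of the tender that wins the bidding after $\rho$. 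By Def.~\ref{def:composition} the next vertex of $\pi$ is $\actpol_i(\rho)$, and Condition~(a) of Def.~\ref{def:contract satisfaction} applied to $\t_i$ gives $\rho \cdot \actpol_i(\rho) \in \pref(\Ass_1 \cap \Ass_2)$, which is the next prefix of $\pi$. Hence $\comp{\t_1}{\t_2} \models \safe_\G(\pref(\Ass_1 \cap \Ass_2))$.

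For the second phase, I would invoke Condition~(b) of Def.~\ref{def:contract satisfaction} for each tender; this is legitimate because $\t_1$ and $\t_2$ are compatible. From $\t_1 \models \AG{\Ass_1}{\spec_1}{\Ass_2}$ together with the safety established above, $\comp{\t_1}{\t_2} \models \Ass_2 \land (\Ass_1 \implies \spec_1)$; symmetrically $\t_2 \models \AG{\Ass_2}{\spec_2}{\Ass_1}$ yields $\comp{\t_1}{\t_2} \models \Ass_1 \land (\Ass_2 \implies \spec_2)$. The key point --- and the one that deserves the most care --- is that this reasoning looks circular but is not: each tender guarantees the \emph{other} tender's assumption \emph{unconditionally}, not merely under its own assumption, so the two guarantees $\pi \in \Ass_1$ (from $\t_2$) and $\pi \in \Ass_2$ (from $\t_1$) hold outright. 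Chaining them with the conditional parts, $\pi \in \Ass_1$ and $\Ass_1 \implies \spec_1$ give $\pi \in \spec_1$, and $\pi \in \Ass_2$ and $\Ass_2 \implies \spec_2$ give $\pi \in \spec_2$, so $\pi \in \spec_1 \cap \spec_2$. Besides this conceptual point, the only routine bookkeeping is matching the finite path fed to the winning action policy in Def.~\ref{def:composition} with the prefix of $\pi$ used in Conditions~(a) and~(b).
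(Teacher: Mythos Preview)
Your proposal is correct and follows the same two-phase structure as the paper's proof: first establish $\safe_\G(\pref(\Ass_1\cap\Ass_2))$ by induction using Condition~(a), then discharge Condition~(b) for each tender to obtain $\Ass_1$, $\Ass_2$, and hence $\spec_1\cap\spec_2$. Your second phase is in fact tighter than the paper's: the paper detours through a safety/liveness decomposition of $\Ass_1\cap\Ass_2$ and appeals to a Richman bidding-game result to argue that the liveness part of each $\Ass_{-i}$ can be fulfilled, whereas you observe (correctly) that once safety holds, Condition~(b) already yields $\Ass_{-i}$ outright, so no external bidding-game fact is needed for this proposition.
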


\begin{proof}
	In the following, for a given language $L\in \V^\omega$, we write $\safe_\G(\pref(L))$ to denote the set of infinite paths which can always be extended to $L$, i.e., \linebreak
	$\set{\vinit v^1\ldots \in \infpaths(\T) \mid \forall i\geq 0\;.\; \vinit\ldots v^{i}\in \pref(L)}$.
	
	We claim that both assumptions $\Ass_1,\Ass_2$ will be fulfilled, from which Cond.~(b) of Def.~\ref{def:contract satisfaction} will imply satisfaction of both $\spec_1$ and $\spec_2$ by $\t_1$ and $\t_2$, respectively.
	Let $\Ass=\Ass_1\cap \Ass_2$, and $\Ass$ can be decomposed into safety and liveness components as $\Ass = \safe_\G(\pref(\Ass))\cap \left(\safe_\G(\pref(\Ass))\implies \Ass\right)$.
	We prove the claim on the two components separately.
	Firstly, the fact that $\comp{\t_1}{\t_2}$ implements $\pref(\Ass)$ on $\T$ can be proven by induction over the length of the generated path:
	The base case is given by the assumption $\vinit\in \pref(\Ass_1\cap\Ass_2)$, and for every finite path $\rho$, if $\t_i$ wins the bidding and if $\rho\in \pref(\Ass_1\cap\Ass_2)\subset \pref(\Ass_{i})$ then $\t_i$ needs to ensure that the next vertex $v'$ satisfies $\rho v'\in\pref(\Ass_i\cap\Ass_{-i})$ (consequence of Cond.~II-III of Prob.~\ref{prob:ag synthesis} and Cond.~(a) of Def.~\ref{def:contract satisfaction}), thereby implying that the path will always remain inside $\pref(\Ass_1\cap\Ass_2)$, proving the safety part.
	
	For the liveness part, we use known results from Richman bidding games, which guarantee that in an infinite horizon game, with any arbitrary positive initial budget, players can always eventually visit any vertex that can be reached \cite{LLPSU99}.
	This implies that if the invariance $\safe_\G(\pref(\Ass_1\cap\Ass_2))$ holds, then each tender $\t_i$ can actually fulfill $\Ass_{-i}$ (they are required to do so by Cond.~(b) of Def.~\ref{def:contract satisfaction}) when composed with any compatible tender in the long run.
	Therefore, $\Ass_1\cap\Ass_2$ will be fulfilled.
	\qed
\end{proof}

In bidding games literature, it is unknown how to compute strategies for objectives which can be violated if a given assumption is violated by the opponent, like in Cond.~(a) in Def.~\ref{def:contract satisfaction}.
The challenge stems from the lack of separation of the set of available actions to the players, preventing us to impose assumptions only on the opponent's behavior.
We present a practically motivated sound, but possibly incomplete, solution for the decentralized synthesis problem, by using a stronger way of satisfying the contract, namely by requiring each tender $\t_i$ to use actions so that the generated path remains in $\pref(\Ass_1\cap\Ass_2)$ all the time.
Formally, we say that the tender $\t_i$ \emph{strongly} fulfills $\spec_i$ under the contract $\tup{\Ass_1,\Ass_2}$, written $\t_i\models_{s} \AG{\Ass_i}{\spec_i}{\Ass_{-i}}$, if, instead of Cond.~(a) of Def.~\ref{def:contract satisfaction}, for every finite path $\rho$, we have $\rho\cdot\actpol_i(\rho)\in\pref(\Ass_1\cap\Ass_2)$, regardless of whether $\rho\in\pref(\Ass_i)$ or not, and moreover Cond.~(b) of Def.~\ref{def:contract satisfaction} is fulfilled.
It is easy to show that $\t_i\models_s \AG{\Ass_i}{\spec_i}{\Ass_{-i}}$ implies $\t_i\models \AG{\Ass_i}{\spec_i}{\Ass_{-i}}$, so that $\spec_1\cap\spec_2$ will be fulfilled.

Similar to $\aaprob$, we extract a sub-graph $\G'$ of $\G$, called the \emph{largest contract-satisfying sub-graph}, whose every path belongs to $\pref(\Ass_1\cap\Ass_2)$, and vice versa; we omit the construction, which follows usual automata-theoretic procedure from the literature~\cite{alpern1987recognizing}.
For example, in Ex.~\ref{ex:motivation:assume-guarantee}, the largest contract-satisfying sub-graph of the graph in Fig.~\ref{fig:motivating example:contract} is the one that only excludes the vertices $c$ and $f$.
It follows that when the tenders strongly fulfill their objectives under the contracts, it is guaranteed that every path always remains in $\G'$.

\begin{restatable}[Assume-guarantee decentralized synthesis]{theorem}{AssumeGuaranteeDecision}
	\label{thm:assume-guarantee decision procedure}
	Let $\G=\tup{\V,\vinit,\E}$ be a graph, $\spec_1$ and $\spec_2$ be a pair of overlapping $\omega$-regular objectives, and $\Ass_1$ and $\Ass_2$ be $\omega$-regular assumptions.
	Let $\G'$ be the largest contract-satisfying sub-graph of $\G$.
	A pair of robust tenders exist if $\th_{\Ass_2\cap\spec_1}^{\G'}(\vinit) + \th_{\Ass_1\cap\spec_2}^{\G'}(\vinit) < 1$.
	Moreover, $\agprob$ is in PTIME.
\end{restatable}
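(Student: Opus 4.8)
The plan is to mirror the structure of the proof of Theorem~\ref{thm:strong:reach} (strong decentralized synthesis), but carried out inside the largest contract-satisfying sub-graph $\G'$ rather than $\G$, and with the objectives replaced by the ``contract-augmented'' objectives $\Ass_2 \cap \spec_1$ and $\Ass_1 \cap \spec_2$. The first step is to recall the construction of $\G'$: since $\Ass_1 \cap \Ass_2$ is $\omega$-regular, $\pref(\Ass_1 \cap \Ass_2)$ is a safety property, so by the standard automata-theoretic construction one obtains a sub-graph $\G'$ (a product of $\G$ with the minimal safety automaton, restricted to the ``live'' region) such that the set of infinite paths of $\G'$ is exactly $\safe_\G(\pref(\Ass_1 \cap \Ass_2))$, and every finite path of $\G$ that stays in $\pref(\Ass_1\cap\Ass_2)$ corresponds to a finite path of $\G'$. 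Note that $\vinit \in \pref(\Ass_1\cap\Ass_2)$ is part of the problem input, so $\vinit$ survives into $\G'$.

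Next I would establish the ``if'' direction, which is the constructive heart of the statement. Suppose $\th_{\Ass_2\cap\spec_1}^{\G'}(\vinit) + \th_{\Ass_1\cap\spec_2}^{\G'}(\vinit) < 1$. For $i \in \set{1,2}$, solve the zero-sum bidding game $\zug{\G', \Ass_{-i} \cap \spec_i}$ (these are $\omega$-regular objectives, so thresholds exist and winning strategies exist by~\cite{LLPSU99}), obtaining a winning \PO strategy $\zug{\actpol_i,\bidpol_i}$ from every configuration with budget above $\th_{\Ass_{-i}\cap\spec_i}^{\G'}(\vinit)$. Set $\t_i = \zug{\actpol_i, \bidpol_i, \th_{\Ass_{-i}\cap\spec_i}^{\G'}(\vinit)}$; by hypothesis these two tenders are compatible. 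It remains to verify that $\t_i \models_s \AG{\Ass_i}{\spec_i}{\Ass_{-i}}$. For the strong safety condition: because $\actpol_i$ only selects edges of $\G'$, every extension $\rho\cdot\actpol_i(\rho)$ stays in $\pref(\Ass_1\cap\Ass_2)$, giving the replacement of Cond.~(a). For Cond.~(b): against any compatible $\t_{-i}$, if the generated play lies in $\safe_\G(\pref(\Ass_1\cap\Ass_2))$ — equivalently, is a play of $\G'$ — then the pair $\zug{\actpol_{-i},\bidpol_{-i}}$ is a legal \PT strategy in $\zug{\G', \Ass_{-i}\cap\spec_i}$, so since $\zug{\actpol_i,\bidpol_i}$ is winning, the play satisfies $\Ass_{-i}\cap\spec_i$, which certainly implies $\Ass_{-i} \wedge (\Ass_i \Rightarrow \spec_i)$. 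Hence $\t_i \models_s \AG{\Ass_i}{\spec_i}{\Ass_{-i}}$, and by the implication already noted in the text, $\t_i \models \AG{\Ass_i}{\spec_i}{\Ass_{-i}}$; the tenders thus witness membership in $\agprob$, and their composition satisfies $\spec_1 \cap \spec_2$ by Prop.~\ref{prop:assume-guarantee:proof rule soundness}. (One subtlety: the term ``robust tenders'' in the statement should be read as ``a pair of tenders solving $\agprob$'' — robust relative to the contract — and I would phrase the proof to make this reading explicit.)

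For the PTIME claim, I would invoke the same reasoning as at the end of the proof of Theorem~\ref{thm:strong:reach}: computing $\G'$ from $\G$ and the (fixed-size) automata for $\Ass_1,\Ass_2$ is polynomial; the objectives $\Ass_{-i}\cap\spec_i$ on $\G'$ are parity objectives of polynomial index, and by Theorems~\ref{thm:RT-reach} and~\ref{thm:bidding-parity} a parity bidding game reduces to computing BSCCs (polynomial) plus a reachability bidding game, whose thresholds on a binary graph are computed in PTIME by the fixed-point characterization in~\eqref{eq:bidding games:reach threshold}; finally one checks the sum of the two root thresholds against $1$. Winning \PO strategies (memoryless action policy plus the explicit optimal bidding tactic) are read off from the same computation.

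The main obstacle I anticipate is not the ``if'' direction but making precise the two soundness hand-offs: first, that restricting attention to $\G'$ is legitimate, i.e., that a tender confined to $\G'$ still behaves correctly when its opponent may leave $\pref(\Ass_1\cap\Ass_2)$ — here the point is exactly Cond.~(b)'s guarded form, which vacuously holds once safety is violated, so confinement to $\G'$ loses nothing; and second, the liveness aspect of $\Ass_{-i}$ hidden inside the $\omega$-regular (rather than merely safety) objective $\Ass_{-i}\cap\spec_i$, which is handled by the Richman-bidding fact (from~\cite{LLPSU99,AHC17}) that inside a strongly connected region every vertex is reachable infinitely often with any positive budget — the same ingredient used in the proof of Prop.~\ref{prop:assume-guarantee:proof rule soundness}. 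I would also flag, as the text does, that this is only a sound (possibly incomplete) solution: the theorem is stated as an ``if'', not ``iff'', precisely because the strong contract-satisfaction notion is used in place of the weaker Cond.~(a).
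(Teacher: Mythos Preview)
Your proposal is correct and follows essentially the same approach as the paper: construct the largest contract-satisfying sub-graph $\G'$, reduce to strong decentralized synthesis on $\G'$ with the augmented objectives $\spec_i' = \Ass_{-i}\cap\spec_i$, and observe that the resulting robust tenders strongly satisfy the contract (hence satisfy it in the ordinary sense). Your write-up is in fact more explicit than the paper's on several points---the verification of the strong replacement for Cond.~(a), the guarded form of Cond.~(b), and the PTIME argument via Theorems~\ref{thm:RT-reach} and~\ref{thm:bidding-parity}---and your flagging of the ``if'' (not ``iff'') nature of the claim and the binary-graph qualification for PTIME is appropriate.
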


\begin{proof}
	Let $\G,\spec_1,\spec_2,\Ass_1$, and $\Ass_2$ are as in the theorem statement.
	We construct a new graph $\G'$, which is a sub-graph of $\G$, whose every path belongs to $\pref(\Ass_1\cap\Ass_2)$ in $\G$, and vice versa; the construction follows usual automata-theoretic procedure from the literature~\cite{alpern1987recognizing}.
	We assume that tenders strongly fulfill their objectives under the contracts, which guarantees that every path will remain inside $\G'$ all the time.
	
	We now solve the strong decentralized synthesis problems on $\G'$ with objectives $\spec_1' = \Ass_2\cap \spec_1$ and $\spec_2' = \Ass_1\cap\spec_2$, and if $\tup{\G',\spec_1',\spec_2'}\in \strongprob$, we claim that $\tup{\G',\spec_1,\spec_2,\Ass_1,\Ass_2}\in \agprob$, and the certificates $\t_1,\t_2$ of $\strongprob$ are also certificates of $\agprob$.
	The claim follows by observing that the tenders are compatible, and moreover each tender $\t_i$ always maintains that the path remains inside $\pref(\Ass_1\cap\Ass_2)$ (i.e., $\G'$), and moreover it fulfills $\spec_i'$, thereby satisfying $\t_i\models_s\AG{\Ass_i}{\spec_i}{\Ass_{-i}}$, which implies the satisfaction of $\t_i\models\AG{\Ass_i}{\spec_i}{\Ass_{-i}}$.
	\qed
\end{proof}


\section{Related Work}

\emph{Shielding}~\cite{KAB+17} is a framework in which a runtime monitor called a {\em shield} enforces an unverified policy $\pi$ (e.g., generated using reinforcement learning~\cite{AB+19}) to satisfy a given specification. A shield operates by observing, at each point in time, the action proposed by $\pi$ and can alter it, e.g., if safety is violated.  
The choice of who acts at each point in time, $\pi$ or the shield, can be seen as a scheduling choice similar to our setting. 
However,  the goals of the two approaches are different: our goal is to design tenders for modular policy synthesis, whereas a shield is meant as a verified ``wrapper'' for a complex policy. 
Technically, in auction-based scheduling, the scheduling depends on the auction which is external to the policies, whereas in shielding, it is the shield who chooses whether to override $\pi$. 

In {\em distributed reactive synthesis}~\cite{pneuli1990distributed,kupermann2001synthesizing,finkbeiner2005uniform}, the goal is to design a collection of Mealy machines whose communication is dictated by a given {\em communication architecture}. 
Distributed synthesis is well studied and we point to a number of works that considered objectives that are a conjunction $\spec_1\land\spec_2\land \ldots$ of sub-objectives $\spec_1,\spec_2,\ldots$~\cite{filiot2010compositional,basset2018compositional,majumdar2020assume,finkbeiner2022compositional,bansal2022compositional,anand2023contract}. 
While there is a conceptual similarity between our synthesis of tenders and the synthesis of Mealy machines, there is a fundamental difference between the approaches.
Namely, our composition is based on scheduling, i.e., exactly one policy is scheduled at each point in time, whereas in distributed synthesis, the composition of the Mealy machines is performed in parallel, i.e., they all read and write at each point in time.

Our algorithms build upon the rich literature on bidding games on graphs. 
The bidding mechanism that we focus on is called {\em Richman} bidding~\cite{LLPU96,LLPSU99,AHC19}. Other bidding mechanisms have been studied: {\em poorman}~\cite{AHI18}, {\em taxman}~\cite{AHZ21}, and {\em all-pay}~\cite{AIT20,AJZ21}. Auction-based scheduling can be instantiated with any of these mechanisms and the properties from bidding games transfer immediately (which differ significantly for quantitative objectives). 
Of particular interest in practice is {\em discrete bidding}, in which the granularity of the bids is restricted~\cite{DP10,AAH21,AS22}. 
To the best of our knowledge, beyond our work, non-zero-sum bidding games have only been considered in~\cite{meir2018bidding}. 
The solution concept that they consider is {\em subgame perfect equilibrium} (SPE). While it is suitable to model the interaction between selfish agents, we argue that it is less suitable in decentralized synthesis. 

There are many works on designing optimal policies for multi-objective sequential decision making problems for various different system models; see the
survey by Roijers et al.~\cite{roijers2013survey} and works on multi-objective stochastic \linebreak games ~\cite{basset2015strategy,basset2018compositional,chatterjee2018combinations}.
To the best of our knowledge, no prior work considers the decomposition of the problem into individual task-dependent policies like us.
Auctions to distribute tasks to agents have been considered\linebreak extensively~\cite{farber1972structure,smith1980contract,chong2003sensor,dias2006market,basile2019auction,de2020automated,ouelhadj2009survey}. 
Their goal is very different: their agents bid for tasks, that is, a bid represents an agent's cost (e.g., in terms of resources) for performing a task. 
The auction then allocates the tasks to agents so as to minimize the individual costs, giving rise to an efficient global policy.

	\section{Conclusion and Future Work}
	We present the auction-based scheduling framework. Rather than synthesizing a monolithic policy for a conjunction of objectives $\spec_1\land\spec_2$, we synthesize two independent tenders for each of the objectives and compose the tenders at runtime using auction-based scheduling. A key advantage of the framework is modularity; each tender can be synthesized and modified independently. 
	We study three instantiations of decentralized synthesis in planning problems with varying degree of flexibility and practical usability, and develop algorithms based on bidding games. 
	Interestingly, we show that a pair of admissible-winning tenders always exists in binary graphs for reachability objectives and they can be found in PTIME. 
	This positive result illustrates the strength and potential of the auction-based scheduling framework. 
		
	There are plenty of directions of future research and we list a handful.  
	First, we consider only qualitative objectives and it is interesting to lift the results to \emph{quantitative} objectives, where one can quantify the fairness achieved by the scheduling mechanism in a fine-grained manner. Moreover, it is appealing to employ the rich literature on {\em mean-payoff} bidding games. 
	Second, we consider a conjunction of two objectives, and it is interesting to extend the approach to a conjunction of multiple objectives. 
	This will require extending the theory of bidding games to the multi-player setting, which have not yet been studied. 
	Finally, it is particularly interesting to extend the technique of auction-based scheduling beyond path-planning problems, for example, it is interesting to consider decentralized synthesis of controllers that operate in an adversarial or probabilistic environment. Again, the corresponding bidding games need to be studied (so far only sure winning has been considered for bidding games played on MDPs~\cite{AHIN19}). 
	
	\subsubsection*{Acknowledgements}	
	    This work was supported in part by the ERC project ERC-2020-AdG 101020093 and by ISF grant no. 1679/21.

	\bibliographystyle{splncs04}
	\bibliography{references,ga}

\begin{thebibliography}{10}
\providecommand{\url}[1]{\texttt{#1}}
\providecommand{\urlprefix}{URL }
\providecommand{\doi}[1]{https://doi.org/#1}

\bibitem{BFH08}
Adam, B., Amanda, F., Keisler, H.J.: Admissibility in games. In: Econometrica
  (2008)

\bibitem{AAH21}
Aghajohari, M., Avni, G., Henzinger, T.A.: Determinacy in discrete-bidding
  infinite-duration games. Log. Methods Comput. Sci.  \textbf{17}(1) (2021)

\bibitem{de2005code}
de~Alfaro, L., Faella, M., Majumdar, R., Raman, V.: Code aware resource
  management. In: Proceedings of the 5th ACM international conference on
  Embedded software. pp. 191--202 (2005)

\bibitem{alpern1987recognizing}
Alpern, B., Schneider, F.B.: Recognizing safety and liveness. Distributed
  computing  \textbf{2},  117--126 (1987)

\bibitem{amla2001assume}
Amla, N., Emerson, E.A., Namjoshi, K., Trefler, R.: Assume-guarantee based
  compositional reasoning for synchronous timing diagrams. In: International
  Conference on Tools and Algorithms for the Construction and Analysis of
  Systems. pp. 465--479. Springer (2001)

\bibitem{anand2023contract}
Anand, A., Nayak, S.P., Schmuck, A.K.: Contract-based distributed synthesis in
  two-objective parity games. arXiv preprint arXiv:2307.06212  (2023)

\bibitem{AB+19}
Avni, G., Bloem, R., Chatterjee, K., Henzinger, T.A., K{\"{o}}nighofer, B.,
  Pranger, S.: Run-time optimization for learned controllers through
  quantitative games. In: Proc. 31st CAV. pp. 630--649 (2019)

\bibitem{AH20}
Avni, G., Henzinger, T.A.: A survey of bidding games on graphs. In: Proc. 31st
  CONCUR. LIPIcs, vol.~171, pp. 2:1--2:21. Schloss Dagstuhl - Leibniz-Zentrum
  f{\"{u}}r Informatik (2020)

\bibitem{AHC17}
Avni, G., Henzinger, T.A., Chonev, V.: Infinite-duration bidding games. In:
  Proc. 28th CONCUR. LIPIcs, vol.~85, pp. 21:1--21:18 (2017)

\bibitem{AHC19}
Avni, G., Henzinger, T.A., Chonev, V.: Infinite-duration bidding games. J. ACM
  \textbf{66}(4),  31:1--31:29 (2019)

\bibitem{AHI18}
Avni, G., Henzinger, T.A., Ibsen-Jensen, R.: Infinite-duration poorman-bidding
  games. In: Proc. 14th WINE. LNCS, vol. 11316, pp. 21--36. Springer (2018)

\bibitem{AHIN19}
Avni, G., Henzinger, T.A., Ibsen{-}Jensen, R., Novotn{\'{y}}, P.: Bidding games
  on markov decision processes. In: Proc. 13th RP. pp. 1--12 (2019)

\bibitem{AHZ21}
Avni, G., Henzinger, T.A., Zikelic, D.: Bidding mechanisms in graph games. J.
  Comput. Syst. Sci.  \textbf{119},  133--144 (2021)

\bibitem{AIT20}
Avni, G., Ibsen{-}Jensen, R., Tkadlec, J.: All-pay bidding games on graphs. In:
  Proc. 34th AAAI. pp. 1798--1805. {AAAI} Press (2020)

\bibitem{AJZ21}
Avni, G., Jecker, I., \v{Z}ikeli\'c, {\DJ}.: Infinite-duration all-pay bidding
  games. In: Proc. 32nd SODA. pp. 617--636 (2021)

\bibitem{AS22}
Avni, G., Sadhukhan, S.: Computing threshold budgets in discrete-bidding games.
  In: Proc. 42nd FSTTCS. LIPIcs, vol.~250, pp. 30:1--30:18. Schloss Dagstuhl -
  Leibniz-Zentrum f{\"{u}}r Informatik (2022)

\bibitem{AMS23}
Avni, G., Mallik, K., Sadhukhan, S.: Auction-based scheduling. CoRR
  \textbf{abs/2310.11798} (2023),
  \url{https://doi.org/10.48550/arXiv.2310.11798}

\bibitem{bansal2022compositional}
Bansal, S., De~Giacomo, G., Di~Stasio, A., Li, Y., Vardi, M.Y., Zhu, S.:
  Compositional safety ltl synthesis. In: Working Conference on Verified
  Software: Theories, Tools, and Experiments. pp. 1--19. Springer (2022)

\bibitem{basile2019auction}
Basile, F., Chiacchio, P., Di~Marino, E.: An auction-based approach to control
  automated warehouses using smart vehicles. Control Engineering Practice
  \textbf{90},  285--300 (2019)

\bibitem{basset2015strategy}
Basset, N., Kwiatkowska, M., Topcu, U., Wiltsche, C.: Strategy synthesis for
  stochastic games with multiple long-run objectives. In: Tools and Algorithms
  for the Construction and Analysis of Systems: 21st International Conference,
  TACAS 2015, Held as Part of the European Joint Conferences on Theory and
  Practice of Software, ETAPS 2015, London, UK, April 11-18, 2015, Proceedings
  21. pp. 256--271. Springer (2015)

\bibitem{basset2018compositional}
Basset, N., Kwiatkowska, M., Wiltsche, C.: Compositional strategy synthesis for
  stochastic games with multiple objectives. Information and Computation
  \textbf{261},  536--587 (2018)

\bibitem{Berwanger07}
Berwanger, D.: Admissibility in infinite games. In: Thomas, W., Weil, P. (eds.)
  {STACS} 2007, 24th Annual Symposium on Theoretical Aspects of Computer
  Science, Aachen, Germany, February 22-24, 2007, Proceedings. Lecture Notes in
  Computer Science, vol.~4393, pp. 188--199. Springer (2007)

\bibitem{brenguier2015assume}
Brenguier, R., Raskin, J.F., Sankur, O.: Assume-admissible synthesis. arXiv
  preprint arXiv:1507.00623  (2015)

\bibitem{chatterjee2018combinations}
Chatterjee, K., Piterman, N.: Combinations of qualitative winning for
  stochastic parity games. arXiv preprint arXiv:1804.03453  (2018)

\bibitem{chong2003sensor}
Chong, C.Y., Kumar, S.P.: Sensor networks: evolution, opportunities, and
  challenges. Proceedings of the IEEE  \textbf{91}(8),  1247--1256 (2003)

\bibitem{de2020automated}
De~Ryck, M., Versteyhe, M., Debrouwere, F.: Automated guided vehicle systems,
  state-of-the-art control algorithms and techniques. Journal of Manufacturing
  Systems  \textbf{54},  152--173 (2020)

\bibitem{DP10}
Develin, M., Payne, S.: Discrete bidding games. The Electronic Journal of
  Combinatorics  \textbf{17}(1), ~R85 (2010)

\bibitem{dias2006market}
Dias, M.B., Zlot, R., Kalra, N., Stentz, A.: Market-based multirobot
  coordination: A survey and analysis. Proceedings of the IEEE  \textbf{94}(7),
   1257--1270 (2006)

\bibitem{farber1972structure}
Farber, D.J., Larson, K.C.: The structure of a distributed computing
  system--software. In: Proceedings of the Symposium on Computer-communications
  Networks and Teletraffic. pp. 539--545 (1972)

\bibitem{filiot2010compositional}
Filiot, E., Jin, N., Raskin, J.F.: Compositional algorithms for ltl synthesis.
  In: International Symposium on Automated Technology for Verification and
  Analysis. pp. 112--127. Springer (2010)

\bibitem{finkbeiner2022compositional}
Finkbeiner, B., Passing, N.: Compositional synthesis of modular systems.
  Innovations in Systems and Software Engineering  \textbf{18}(3),  455--469
  (2022)

\bibitem{finkbeiner2005uniform}
Finkbeiner, B., Schewe, S.: Uniform distributed synthesis. In: 20th Annual IEEE
  Symposium on Logic in Computer Science (LICS'05). pp. 321--330. IEEE (2005)

\bibitem{hahn2023multi}
Hahn, E.M., Perez, M., Schewe, S., Somenzi, F., Trivedi, A., Wojtczak, D.:
  Multi-objective omega-regular reinforcement learning. Formal Aspects of
  Computing  (2023)

\bibitem{houli2010multiobjective}
Houli, D., Zhiheng, L., Yi, Z.: Multiobjective reinforcement learning for
  traffic signal control using vehicular ad hoc network. EURASIP journal on
  advances in signal processing  \textbf{2010}, ~1--7 (2010)

\bibitem{KAB+17}
K{\"{o}}nighofer, B., Alshiekh, M., Bloem, R., Humphrey, L., K{\"{o}}nighofer,
  R., Topcu, U., Wang, C.: Shield synthesis. Formal Methods in System Design
  \textbf{51}(2),  332--361 (2017)

\bibitem{kupermann2001synthesizing}
Kupermann, O., Varfi, M.: Synthesizing distributed systems. In: Proceedings
  16th Annual IEEE Symposium on Logic in Computer Science. pp. 389--398. IEEE
  (2001)

\bibitem{LLPSU99}
Lazarus, A.J., Loeb, D.E., Propp, J.G., Stromquist, W.R., Ullman, D.H.:
  Combinatorial games under auction play. Games and Economic Behavior
  \textbf{27}(2),  229--264 (1999)

\bibitem{LLPU96}
Lazarus, A.J., Loeb, D.E., Propp, J.G., Ullman, D.: Richman games. Games of No
  Chance  \textbf{29},  439--449 (1996)

\bibitem{majumdar2020assume}
Majumdar, R., Mallik, K., Schmuck, A.K., Zufferey, D.: Assume--guarantee
  distributed synthesis. IEEE Transactions on Computer-Aided Design of
  Integrated Circuits and Systems  \textbf{39}(11),  3215--3226 (2020)

\bibitem{meir2018bidding}
Meir, R., Kalai, G., Tennenholtz, M.: Bidding games and efficient allocations.
  Games and Economic Behavior  \textbf{112},  166--193 (2018)

\bibitem{ouelhadj2009survey}
Ouelhadj, D., Petrovic, S.: A survey of dynamic scheduling in manufacturing
  systems. Journal of scheduling  \textbf{12},  417--431 (2009)

\bibitem{pneuli1990distributed}
Pneuli, A., Rosner, R.: Distributed reactive systems are hard to synthesize.
  In: Proceedings [1990] 31st Annual Symposium on Foundations of Computer
  Science. pp. 746--757. IEEE (1990)

\bibitem{roijers2013survey}
Roijers, D.M., Vamplew, P., Whiteson, S., Dazeley, R.: A survey of
  multi-objective sequential decision-making. Journal of Artificial
  Intelligence Research  \textbf{48},  67--113 (2013)

\bibitem{smith1980contract}
Smith, R.G.: The contract net protocol: High-level communication and control in
  a distributed problem solver. IEEE Transactions on computers
  \textbf{29}(12),  1104--1113 (1980)

\end{thebibliography}
	
\end{document}